\documentclass[lettersize,journal]{IEEEtran}
\usepackage{amsmath,amsfonts}
\usepackage{array}
\usepackage[caption=false,subrefformat=parens]{subfig}
\usepackage{textcomp}
\usepackage{stfloats}
\usepackage{url}
\usepackage{verbatim}
\usepackage{graphicx}
\usepackage{cite}

\usepackage{amssymb}
\usepackage{amsthm}
\usepackage{hyperref}
\usepackage[ruled]{algorithm2e}
\usepackage{bm}
\usepackage{multirow}
\usepackage{tabularx}
\usepackage{multicol}
\usepackage{booktabs}
\usepackage{threeparttable}
\usepackage{mathtools}
\usepackage{xcolor}

\DeclarePairedDelimiter\sbrac{(}{)}

\DeclarePairedDelimiter\bbrac{\{}{\}}
\newcommand{\sbr}[1]{\sbrac*{#1}}

\newcommand{\bbr}[1]{\bbrac*{#1}}

\def\R{\mathbb{R}}
\def\calX{\mathcal{X}}
\def\calU{\mathcal{U}}
\def\calZ{\mathcal{Z}}

\def\rmZ{\mathrm{Z}}
\def\Xcstr{\mathrm{X}_{\mathrm{cstr}}}
\def\Xinit{\mathrm{X}_{\mathrm{init}}}
\def\hf{\hat{f}}
\def\xuxp{\sbr{x,u,x^\prime}}

\def\st{\mathrm{ s.t. }}
\def\projX{\mathrm{proj}_\calX}

\newtheorem{definition}{Definition}[section]
\newtheorem{theorem}{Theorem}[section]
\newtheorem{proposition}{Proposition}[section]
\newtheorem{assumption}{Assumption}
\newtheorem{corollary}{Corollary}[theorem]
\newtheorem{lemma}{Lemma}[section]

\newcommand{\change}[1]{#1}

\begin{document}

\title{On the Equilibrium between Feasible Zone and Uncertain Model in Safe Exploration}

\author{Yujie Yang, Zhilong Zheng, Shengbo Eben Li
\thanks{Y. Yang and Z. Zheng contributed equally to this work. All correspondence should be sent to S. Li with email: lishbo@tsinghua.edu.cn.}
\thanks{Yujie Yang, Zhilong Zheng, and Shengbo Eben Li are with School of Vehicle and Mobility and State Key Lab of Intelligent Green Vehicle and Mobility, Tsinghua University, Beijing, 100084, China (e-mail: \{yangyj21,zheng-zl22\}@mails.tsinghua.edu.cn, lishbo@tsinghua.edu.cn).}}



\maketitle

\begin{abstract}
Ensuring the safety of environmental exploration is a critical problem in reinforcement learning (RL).
While limiting exploration to a feasible zone has become widely accepted as a way to ensure safety, key questions remain unresolved: what is the maximum feasible zone achievable through exploration, and how can it be identified?
This paper, for the first time, answers these questions by revealing that the goal of safe exploration is to find the equilibrium between the feasible zone and the environment model.
This conclusion is based on the understanding that these two components are interdependent: a larger feasible zone leads to a more accurate environment model, and a more accurate model, in turn, enables exploring a larger zone.
We propose the first equilibrium-oriented safe exploration framework called safe equilibrium exploration (SEE), which alternates between finding the maximum feasible zone and the least uncertain model.
Using a graph formulation of the uncertain model, we prove that the uncertain model obtained by SEE is monotonically refined, the feasible zones monotonically expand, and both converge to the equilibrium of safe exploration.
Experiments on classic control tasks show that our algorithm successfully expands the feasible zones with zero constraint violation, and achieves the equilibrium of safe exploration within a few iterations.
\end{abstract}

\begin{IEEEkeywords}
Safe exploration, reinforcement learning, feasible zone, uncertain model
\end{IEEEkeywords}

\section{Introduction}
\IEEEPARstart{R}{einforcement} learning (RL) has achieved promising performance in various domains, including video games~\cite{mnih2015human}, board games~\cite{silver2017mastering}, and language modeling~\cite{ouyang2022training}.
However, a critical problem that limits the application of RL in real-world control tasks is its lack of safety in the training process.
This comes from the trial-and-error learning mechanism of RL, where the agent needs to explore the environment and collect interaction data to improve the policy.
In real-world control tasks, unrestricted environmental exploration may lead to undesirable states and violate safety constraints, causing personal injury and property damage.
For example, an autonomous vehicle may collide with surrounding vehicles and humans when exploring a dynamic traffic environment.
This necessitates an exploration mechanism that can guarantee strict constraint satisfaction throughout the training process, also known as safe exploration.

\change{
It is worth noting that the necessity of safe exploration depends heavily on the training mode employed. In scenarios where high-fidelity simulators or massive offline datasets are available, the offline training and online implementation (OTOI) mode is typically adopted \cite{li2023reinforcement}. In this mode, the agent learns in a virtual environment without risking real-world failures, eliminating the need for safe exploration. However, for many real-world applications, such as robotics and autonomous driving, perfect simulators are rarely accessible due to complex physical dynamics and the scarcity of data for specific tasks. This necessitates the simultaneous online training and implementation (SOTI) mode, where the agent learns through online interaction with the real environment \cite{li2023reinforcement}. Whether learning from scratch or fine-tuning a policy pre-trained via OTOI, the agent in the SOTI mode inevitably faces uncertainty outside its known safe region. This requires a mechanism that can simultaneously reduce model uncertainty and expand the safe region while ensuring strict constraint satisfaction.
}

All safe exploration methods share a common underlying mechanism for ensuring safety: restricting environmental exploration in a subset of the state-action space, which we call a \textit{feasible zone}~\cite{li2023reinforcement}.
A feasible zone guarantees that, from any state-action pair within it, there exists a policy that keeps all future states constraint-satisfying.
However, merely identifying a feasible zone is insufficient because it only ensures safety without considering policy performance.
An overly small feasible zone, while ensuring safety, is practically meaningless because the policy cannot accomplish any task within such a limited area.
The size of the feasible zone is crucial in safe exploration since a larger feasible zone allows the policy to operate over a broader region, increasing the likelihood of collecting high-reward data for performance improvement.
Therefore, our objective is to identify the maximum feasible zone.
This problem has been solved in the simulator training setting, where exploration safety is not required~\cite{yang2024synthesizing}.
However, when it comes to safe exploration in real-world environments, finding the maximum feasible zone is far more challenging.
This is because our knowledge of the environment is limited: we can only collect data from within the feasible zone, leaving the environment outside it unexplored and unknown.

A widely used method for safe exploration is the safety filter, which is an action monitoring and intervention module applied after a potentially unsafe policy~\cite{hsu2023safety}.
A safety filter monitors whether the action computed by the policy, given the current state, lies in a feasible zone.
If the action is deemed unsafe, i.e., it results in a deviation from the feasible zone, the safety filter intervenes by replacing the action with one that keeps the resulting state-action pair within the feasible zone.
Most existing safety filter methods rely on human-designed constraints to define feasible zones.
For example, Dalal et al.~\cite{dalal2018safe} compute a minimal correction to a nominal action under an instantaneous state constraint of the environment.
Pham et al.~\cite{pham2018optlayer} augment a neural network policy with a quadratic program (QP) layer that constrains the distances and velocities from the robot to its surrounding objects expressed in linear inequalities.
Cheng et al.~\cite{cheng2019end} use a state-affine control barrier function (CBF) as the constraint of a QP-based safety filter, which is added to a model-free RL policy.
Additionally, several other works also use QP-based safety filters with CBFs in control-affine systems for safe control~\cite{ames2014control, hsu2015control, ames2016control}.
Besides CBFs, the safety index proposed by Liu et al.~\cite{liu2014control} can also serve as a safety filter, which is synthesized as a polynomial function of the state and used for online action projection~\cite{zhao2021model}.
However, a notable limitation of safety filter-based methods is that, to ensure safety, their designed constraints are usually too restrictive, resulting in unreasonably small feasible zones and overly conservative policies.
To address this, Thananjeyan et al.~\cite{thananjeyan2021recovery} propose to use offline data to train a neural network that approximates a feasible zone and is used to switch between a task policy and a recovery policy.
While this approach improves flexibility, its effectiveness is heavily dependent on the coverage of offline data, which is difficult to ensure in safety-critical control tasks where data is often limited.

Another kind of safe exploration method seeks to eliminate the reliance on human-designed constraints by directly learning a safe policy and its corresponding feasible zone from interaction data.
These methods start from a small initial feasible zone and gradually expand it through exploration, while the policy is also improved in this process.
To identify and expand the feasible zone, these methods typically learn an environment model that can extrapolate state transitions or constraint functions from inside the feasible zone to outside its boundaries.
A representative example is the method proposed by Berkenkamp et al.~\cite{berkenkamp2017safe}, which uses a Lyapunov function as a safety certificate for statistical models expressed via a Gaussian process. By collecting environment data, they improve both the dynamic model and control policy, while progressively expanding the safe region.
Gaussian processes are favored in many other works for dynamic model learning and online optimization because of their inherent continuity assumption and data-efficient nature~\cite{sui2015safe, schreiter2015safe, wachi2018safe}.
For example, Turchetta et al.~\cite{turchetta2016safe} address the problem of exploring an environment with an unknown constraint function expressed via a Gaussian process. They collect observations to gain statistical confidence about the safety of unvisited states and safely explore the reachable part of the environment.
In a similar vein, Berkenkamp et al.~\cite{berkenkamp2016safe} focus on learning the region of attraction (ROA) through safe exploration based on a Gaussian process model. They estimate the ROA using a Lyapunov function and expand it by exploring states inside.
Their later work~\cite{berkenkamp2017safe} follows a similar practice by considering safety as stability certified by a Lyapunov function under a Gaussian process model.
In addition to Lyapunov functions, Hamilton-Jacobi (HJ) reachability analysis is another method for learning feasible zones.
Fisac et al.~\cite{fisac2018general} learn a HJ reachability value function with a Gaussian process dynamic model, while Yu et al.~\cite{yu2023safe} extend this method by learning a distributional HJ reachability certificate with an ensemble of Gaussian dynamic models.
Both approaches update their models with interaction data, gradually reducing uncertainty during the exploration process.
As a result, the conservativeness of the HJ reachability constraint diminishes over time.
While these methods can empirically explore larger feasible zones compared to safety filter-based methods, it remains unclear whether they can achieve the maximum feasible zone.
The key takeaway from these methods is that the size of the feasible zone depends on the accuracy of the environment model---a more accurate model corresponds to a larger feasible zone.
This is because determining whether a state-action pair belongs to the feasible zone requires considering the worst-case state transitions predicted by the model.
Although these methods use interaction data to reduce model error, they do not provide a theoretical framework for analyzing the optimal model accuracy achievable through this process.
As a result, the maximum attainable size of the feasible zone remains an open problem.

\begin{figure*}
    \centering
    \includegraphics[width=0.95\linewidth]{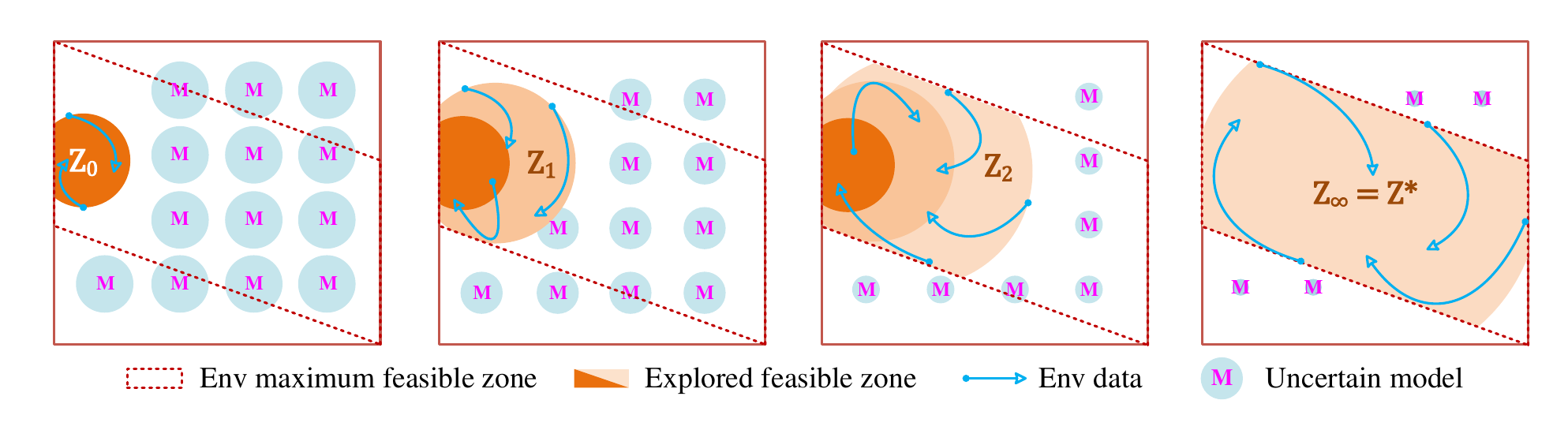}
    \caption{Mechanism of SEE. Exploration starts from a small feasible zone obtained from a prior uncertain model. As more data are collected inside the feasible zone, the model becomes more accurate and the zone is expanded. This process is repeated until the equilibrium between the maximum feasible zone and the least uncertain model is reached.}
    \label{fig: safe exploration mechanism}
\end{figure*}

Through the above analysis, we find that existing methods fail to answer a fundamental question: what is the maximum feasible zone that can theoretically be achieved through safe exploration?
This question is crucial because the maximum feasible zone defines the upper bound on both the safety and performance of any policy learned through safe exploration.
Prior works, particularly methods based on Gaussian processes, suggest that achieving the maximum feasible zone requires finding the most accurate environment model.
However, the feasible zone and the environment model are interdependent in safe exploration.
A more accurate model leads to a larger feasible zone, and a larger feasible zone offers more data over a broader area, which improves model accuracy.
This interdependence implies that the feasible zone and the environment model cannot be optimized separately.
To address this challenge, we propose a safe exploration framework called safe equilibrium exploration (SEE) that simultaneously finds the most accurate environment model and the maximum feasible zone.
Our key insight is that every environment model corresponds to a maximum feasible zone under that model, and every feasible zone also corresponds to an optimal environment model under that zone.
We refer to this optimal model as the \textit{least uncertain model}, which has the minimum error achievable through safe exploration within a given feasible zone.
We point out that the maximum feasible zone in safe exploration is achieved when the \textit{equilibrium} between the feasible zone and environment model is reached, i.e., when they correspond to the maximum feasible zone and the least uncertain model of each other.
Our method iteratively searches for this equilibrium by alternating between computing maximum feasible zones and least uncertain models until convergence, as illustrated in Figure \ref{fig: safe exploration mechanism}.
The main contributions of this paper are summarized as follows:
\begin{enumerate}
\item We propose the notion of the least uncertain model to describe the model with minimum achievable error given a feasible zone.
This model extrapolates state transitions outside the feasible zone with data inside it based on Lipschitz continuity.
Environment data is exploited to the greatest extent so that not a single transition pair can be removed from the least uncertain model.
This notion bridges the environment model and feasible zone, subverting the traditional understanding that the former is learned separately from the latter.
\item We point out that the ultimate goal of safe exploration is to reach the equilibrium between the feasible zone and the uncertain model.
At this equilibrium, the feasible zone cannot be expanded further, and the model uncertainty cannot be reduced any more.
We design an iterative safe exploration framework called SEE to achieve this equilibrium by alternating between finding the maximum feasible zone and the least uncertain model.
We prove that SEE guarantees monotonic expansion of the feasible zone and monotonic decrease of the model uncertainty until the equilibrium is reached.
\item We develop a practical safe exploration algorithm that approximately finds the equilibrium while ensuring strict constraint satisfaction.
The maximum feasible zone is accurately computed by solving a risky Bellman equation using fixed point iteration.
The approximation of our algorithm stems from finding the least uncertain model, which we show to be equivalent to a classic NP-hard problem---the clique decision problem.
We approximately solve this problem in polynomial time based on a sufficient condition for the removability of transition pairs.
Experiments on classic control tasks demonstrate that our algorithm can explore the environment with zero constraint violation and quickly converge to equilibrium.
\end{enumerate}

\section{Preliminaries}
\subsection{Problem description}
We consider a deterministic discrete-time environment
\begin{equation}
    x_{t+1}=f(x_t,u_t),
\end{equation}
where $x\in\calX\subseteq\mathbb{R}^n$ is the state, $u\in\calU\subseteq\mathbb{R}^m$ is the action, and $f:\calX\times\calU\to\calX$ is the unknown environment model. 
The state generated by the mapping of $f$ is called a transition state.
The environment is subject to deterministic state constraints
\begin{equation}
\label{eq: state constraints}
    h(x_t)\le0, t=0,1,2,\dots,\infty,
\end{equation}
where $h:\calX\to\mathbb{R}$ is the constraint function. We define the constrained set as $\mathrm{X_{cstr}}=\{x\in\calX|h(x)\le0\}$.
We aim to find a policy $\pi:\calX\to\calU$ that solves the following constrained optimal control problem:
\begin{equation}
\begin{aligned}
    \max_\pi \ &\mathbb{E}_{x_0\sim\Xinit}\left\{\sum_{t=0}^\infty \gamma^t r(x_t,u_t)\right\} \\
    \mathrm{s.t.} \ &x_{t+1}=f(x_t,u_t) \\
    &h(x_t)\le0, t=0,1,2,\dots,\infty,
\end{aligned}
\end{equation}
where $\Xinit\subseteq\calX$ is the initial state set and $\gamma\in(0,1)$ is the discount factor.
While searching for such a policy, we additionally require that any environmental interaction must satisfy the state constraints \eqref{eq: state constraints}.
This is the essential difference between safe exploration and general safe RL: in safe exploration, not only the final policy is required to be safe, but all intermediate policies must also be safe.
We assume that the environment can be initialized to an arbitrary state, but once the interaction starts, it cannot be terminated until the task is finished.
This means we cannot avoid constraint violations by simply stopping the interaction right before encountering an unsafe state.
Instead, we must ensure the policy is safe in an infinite horizon when exploring the environment.

\subsection{Uncertain model}
If the environment model is completely unknown, accomplishing safe exploration is almost impossible.
This is because, without a model, one can only obtain knowledge of possibly unsafe state transitions through trial and error, which will inevitably lead to constraint violations.
To enable safe exploration, we must have some prior knowledge of the environment, which is described by an uncertain model defined as follows.

\begin{definition}[Uncertain model]
An uncertain model $\hf$ maps a state-action pair to a set of states, i.e., $\hf: \calX\times\calU\to\mathcal{P}(\calX)$, where $\mathcal{P}(\calX)$ is the power set of $\calX$.
\end{definition}

An uncertain model is an inaccurate description of the environment as it maps each state-action pair to a set of states instead of a single state.
The set of states generated by the mapping of an uncertain model is called a transition set.
One way to understand an uncertain model is to think that the prior knowledge of the environment model contains errors.
An uncertain model captures all possible variations of these errors.
For example, if the model error is a random variable with a bounded norm, the transition set would be a ball centered at the transition state of the true model.
The reason for defining an uncertain model with a transition set instead of randomly distributed errors is that we require strict constraint satisfaction. This is possible only when all transition states are safe, even those with low probabilities.
By focusing on the transition set, we explicitly account for all possible transition states, while eliminating the need to consider transition probabilities.
If the uncertain model deviates too far from the true model, it will probably not help safe exploration.
We are interested in the uncertain models that are reasonable estimates of the true model, which is described by the property of well-calibration defined as follows.

\begin{definition}[Well-calibration]
Given an uncertain model $\hf$ and an environment model $f$, $\hf$ is said to be well-calibrated under $f$ if
\begin{equation}
    \forall x\in\calX, u\in\calU, f(x,u)\in\hf(x,u).
\end{equation}
\end{definition}

The aforementioned norm-bounded error is an example of a well-calibrated uncertain model because the true transition state is at the center of, thus included by, the transition set.
With the well-calibration property, we can guarantee safety under the true model by considering all possible cases given by the uncertain model.
If all states in the transition set of an uncertain model are safe, the transition state of the true model must also be safe.

\subsection{Feasible zone}
In safe RL, a feasible region is a set of states that is both constraint-satisfying and control invariant.
In safe exploration, we need to treat feasibility more precisely by additionally taking action into consideration.
Specifically, we need to distinguish the feasibility of actions that can be taken at a state, since there are cases where some actions are acceptable while others are not.
To this end, we extend the concept of a feasible region involving only states to a feasible zone which further includes actions.
We first define the concept of a zone.

\begin{definition}[Zone]
A zone $Z$ is a subset of the Cartesian product of the state space and action space, i.e., $Z\subseteq\calZ=\calX\times\calU$.
\end{definition}

A feasible zone is a set of state-action pairs that are both constraint-satisfying and forward invariant.
Since the true environment model is unknown during exploration, we define a feasible zone under an uncertain model.

\begin{definition}[Feasible zone]
\label{def: feasible zone}
A zone $\rmZ$ is a feasible zone under an uncertain model $\hf$ if
\begin{enumerate}
    \item $\projX(\rmZ)\subseteq\mathrm{X_{cstr}}$,
    \item $\forall(x,u)\in\rmZ,\hf(x,u)\subseteq\projX(\rmZ)$,
\end{enumerate}
where $\projX(\rmZ)=\bbr{x\in\calX|\exists u,\st(x,u)\in\rmZ}$ is the projection of $\rmZ$ on $\calX$.
\end{definition}

The first property in Definition \ref{def: feasible zone} requires that all states in a feasible zone satisfy the state constraint.
The second property requires that any state-action pair in a feasible zone must transfer to a state belonging to some feasible state-action pairs also in this zone so that we can continue to find an available action.
This property ensures that exploration will not leave a feasible zone, which means the feasible zone is forward invariant.
The basic principle of ensuring safe exploration is to restrict environmental interaction within a feasible zone.
In other words, we can only collect interaction data inside a feasible zone.
To learn a high-performance policy, we need to collect data from a large feasible zone---ideally, the maximum feasible zone, which is defined as follows.

\begin{definition}[Maximum feasible zone]
\label{def: maximum feasible zone}
The maximum feasible zone under $\hf$, denoted as $\rmZ^*(\hf)$, is the union of all feasible zones under $\hf$.
\end{definition}

It is important to notice that any feasible zone, including the maximum one, is inherently related to an uncertain model.
This is because the definition of the forward invariance property relies on a transition set, which is directly associated with an uncertain model.
The size of the feasible zone under an uncertain model depends on the sizes of the model's transition sets: smaller transition sets lead to a larger feasible zone because the forward invariance property is more easily satisfied when the transition sets are smaller.
Since the sizes of the transition sets reflect the model's degree of uncertainty, we can conclude that the less uncertain the model is, the larger its feasible zone will be.
Therefore, to maximize the size of an uncertain model's feasible zone, we must reduce its uncertainty to the greatest extent through environmental exploration.

\section{Least Uncertain Model}
To theoretically analyze to what extent a model's uncertainty can be reduced, we propose a concept called the least uncertain model, which exploits all available information from interaction data and continuity of the environment model.

\subsection{Model continuity assumption}
Almost all safe exploration methods assume some kind of continuity on the environment model, e.g., Lipschitz continuity~\cite{berkenkamp2016safe, berkenkamp2017safe}.
However, they only gave intuitive explanations for the necessity of such assumptions without rigorously proving it.
In this section, we prove the necessity of continuity assumptions on the environment model through the mechanism of expanding a feasible zone.
Specifically, we prove that a maximum feasible zone cannot be expanded unless its underlying uncertain model can be updated outside the feasible zone with data inside it, which is only possible under certain continuity assumptions.
First, we define the expandability of a feasible zone.

\begin{definition}[Expandability]
\label{def: expandability}
A feasible zone $\rmZ$ is expandable under an uncertain model $\hf$ if $\rmZ$ is a proper subset of the maximum feasible zone under $\hf$, i.e., $\rmZ\subset\rmZ^*(\hf)$.
\end{definition}

The mechanism of safe exploration is basically a two-step iteration.
The first step is to expand the feasible zone to the maximum one under the current uncertain model.
The second step is to update the uncertain model to reduce its degree of uncertainty, which in turn enlarges its corresponding maximum feasible zone.
This enables the feasible zone to be further expanded in subsequent iterations.
Next, we will discuss the requirements that the updated uncertain model must satisfy to ensure the continuous expansion of the feasible zone.
To begin, we present a necessary and sufficient condition for expandability.

\begin{theorem}
\label{thm: NSC for expandability}
A feasible zone $\rmZ$ is expandable under an uncertain model $\hf$ if and only if there exists $\Delta\rmZ\subseteq\calZ\setminus\rmZ$, such that
\begin{enumerate}
    \item $\projX(\Delta\rmZ)\subseteq\Xcstr$,
    \item $\forall(x,u)\in\Delta\rmZ,\hf(x,u)\subseteq\projX(\rmZ\cup\Delta\rmZ)$.
\end{enumerate}
\end{theorem}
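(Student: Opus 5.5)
The plan is to prove both directions directly from Definitions \ref{def: feasible zone}, \ref{def: maximum feasible zone} and \ref{def: expandability}. The key auxiliary fact is that feasible zones are closed under arbitrary unions, so that $\rmZ^*(\hf)$ is itself a feasible zone. I will read the statement with $\Delta\rmZ$ required to be \emph{nonempty}. If $\Delta\rmZ=\emptyset$, both conditions hold vacuously, and the statement would then claim that every feasible zone is expandable, which is false in general.

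I will first prove the union lemma. Let $\{\rmZ_i\}$ be feasible zones under $\hf$. The projection commutes with unions, $\projX(\bigcup_i\rmZ_i)=\bigcup_i\projX(\rmZ_i)$, and each $\projX(\rmZ_i)\subseteq\Xcstr$, which gives property 1. For property 2, take $(x,u)\in\bigcup_i\rmZ_i$ and pick $i$ with $(x,u)\in\rmZ_i$. Then $\hf(x,u)\subseteq\projX(\rmZ_i)\subseteq\projX(\bigcup_j\rmZ_j)$. Hence $\rmZ^*(\hf)$ is a feasible zone, and every feasible zone under $\hf$ is contained in it.

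\emph{Only if.} Suppose $\rmZ\subset\rmZ^*(\hf)$ and set $\Delta\rmZ=\rmZ^*(\hf)\setminus\rmZ$. This set is nonempty and lies in $\calZ\setminus\rmZ$. Property 1 of $\rmZ^*(\hf)$ gives $\projX(\Delta\rmZ)\subseteq\projX(\rmZ^*(\hf))\subseteq\Xcstr$. Since $\rmZ\cup\Delta\rmZ=\rmZ^*(\hf)$, property 2 of $\rmZ^*(\hf)$ gives condition 2 for every $(x,u)\in\Delta\rmZ$.

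\emph{If.} Given a nonempty $\Delta\rmZ\subseteq\calZ\setminus\rmZ$ satisfying conditions 1 and 2, I will show that $\rmZ\cup\Delta\rmZ$ is a feasible zone. Its projection is $\projX(\rmZ)\cup\projX(\Delta\rmZ)$, which lies in $\Xcstr$ by feasibility of $\rmZ$ and condition 1. For invariance, I split into two cases. If $(x,u)\in\rmZ$, then $\hf(x,u)\subseteq\projX(\rmZ)\subseteq\projX(\rmZ\cup\Delta\rmZ)$. If $(x,u)\in\Delta\rmZ$, condition 2 applies directly. By the union lemma, $\rmZ\subseteq\rmZ\cup\Delta\rmZ\subseteq\rmZ^*(\hf)$. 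Because $\Delta\rmZ$ is nonempty and disjoint from $\rmZ$, the inclusion $\rmZ\subset\rmZ^*(\hf)$ is proper.

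The only real subtlety is the nonemptiness of $\Delta\rmZ$ and the fact that the maximum feasible zone is genuinely feasible. Once the union lemma is in place, both directions are routine set inclusions.
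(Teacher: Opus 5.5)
Your proof is correct and is essentially the paper's argument. For necessity you take $\Delta\rmZ=\rmZ^*(\hf)\setminus\rmZ$, and for sufficiency you show $\rmZ\cup\Delta\rmZ$ is a feasible zone contained in $\rmZ^*(\hf)$; your two additions, requiring $\Delta\rmZ\neq\emptyset$ and proving via closure under unions that $\rmZ^*(\hf)$ is itself feasible, make explicit what the paper uses tacitly (its step $\rmZ\subset\tilde{\rmZ}$ needs nonemptiness, since $\Delta\rmZ=\emptyset$ satisfies both conditions vacuously, and its necessity step applies Definition \ref{def: feasible zone} directly to $\rmZ^*(\hf)$).
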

\begin{proof}
First, we prove the necessity. Since $\rmZ$ is expandable under $\hf$, we have $\rmZ\subset\rmZ^*(\hf)$. Take $\Delta\rmZ=\rmZ^*(\hf)\setminus\rmZ$. According to Definition \ref{def: feasible zone}, we have
$$\projX(\Delta\rmZ)\subseteq\projX(\rmZ^*(\hf))\subseteq\Xcstr.$$
Moreover, $\forall(x,u)\in\Delta\rmZ$, it holds that $(x,u)\in\rmZ^*(\hf)$ and therefore we have
$$\hf(x,u)\subseteq\projX(\rmZ^*(\hf))=\projX(\rmZ\cup\Delta\rmZ).$$
Then, we prove the sufficiency. Let $\tilde{\rmZ}=\rmZ\cup\Delta\rmZ$. We have
$$\projX(\tilde{\rmZ})=\projX(\rmZ)\cup\projX(\Delta\rmZ)\subseteq\Xcstr.$$
Moreover, $\forall(x,u)\in\tilde{\rmZ}$, either $(x,u)\in\rmZ$ or $(x,u)\in\Delta\rmZ$ holds. If $(x,u)\in\rmZ$, then $\hf(x,u)\subseteq\projX(\rmZ)$. If $(x,u)\in\Delta\rmZ$, then $\hf(x,u)\subseteq\projX(\rmZ\cup\Delta\rmZ)$. In both cases, we have
$$\hf(x,u)\subseteq\projX(\rmZ\cup\Delta\rmZ)=\projX(\tilde{\rmZ}).$$
Therefore, $\tilde{\rmZ}$ is a feasible zone under $\hf$. According to Definition \ref{def: maximum feasible zone}, we have $\tilde{\rmZ}\subseteq\rmZ^*(\hf)$. Since $\rmZ\subset\tilde{\rmZ}$, we conclude that $\rmZ\subset\rmZ^*(\hf)$, i.e., $\rmZ$ is expandable under $\hf$.
\end{proof}

Theorem \ref{thm: NSC for expandability} reveals that expanding a feasible zone is equivalent to finding a zone such that the union of the two zones is still a feasible zone.
\change{It is worth noting that the maximum feasible zone $\rmZ^*(\hf)$ is not necessarily connected and may consist of multiple disjoint regions. While Theorem \ref{thm: NSC for expandability} theoretically allows the expansion set $\Delta\rmZ$ to be disjoint from the current feasible zone $\rmZ$, practical exploration typically expands $\rmZ$ continuously through its boundary. Therefore, if $\rmZ^*(\hf)$ is disjoint, an exploration process starting from an initial zone $\rmZ_0$ will likely converge to the maximum connected component of $\rmZ^*(\hf)$ that contains $\rmZ_0$. Accessing other disjoint regions would require re-initializing the exploration within those regions.}
Based on Theorem \ref{thm: NSC for expandability}, we can derive a necessary condition for a maximum feasible zone to be expandable under another uncertain model, which is stated as follows.

\begin{corollary}
\label{cor: NC for expandability}
Let $\rmZ$ be the maximum feasible zone under $\hf_1$, i.e., $\rmZ=\rmZ^*(\hf_1)$. If $\rmZ$ is expandable under $\hf_2$, then $\exists(x,u)\in\calZ\setminus\rmZ$ and $x'\in\hf_1(x,u)$, s.t. $x'\notin\hf_2(x,u)$.
\end{corollary}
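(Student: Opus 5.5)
We argue by contraposition. Suppose that for every $(x,u)\in\calZ\setminus\rmZ$ we have $\hf_1(x,u)\subseteq\hf_2(x,u)$, i.e., no $x'\in\hf_1(x,u)$ is missing from $\hf_2(x,u)$. We will show that $\rmZ=\rmZ^*(\hf_1)$ is then not expandable under $\hf_2$. The tool is Theorem \ref{thm: NSC for expandability}, applied once under $\hf_2$ to extract a witness and once under $\hf_1$ to reach a contradiction.

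Assume, for contradiction, that $\rmZ$ is expandable under $\hf_2$. By the necessity direction of Theorem \ref{thm: NSC for expandability} with $\hf=\hf_2$, there is a nonempty $\Delta\rmZ\subseteq\calZ\setminus\rmZ$ satisfying two conditions:
\begin{equation*}
\projX(\Delta\rmZ)\subseteq\Xcstr, \qquad \hf_2(x,u)\subseteq\projX(\rmZ\cup\Delta\rmZ)\ \text{for all } (x,u)\in\Delta\rmZ.
\end{equation*}
The set is nonempty because we take $\Delta\rmZ=\rmZ^*(\hf_2)\setminus\rmZ$, which is nonempty since the inclusion is proper. Every pair in $\Delta\rmZ$ lies outside $\rmZ$, so the contrapositive hypothesis gives $\hf_1(x,u)\subseteq\hf_2(x,u)\subseteq\projX(\rmZ\cup\Delta\rmZ)$ for all $(x,u)\in\Delta\rmZ$. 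Hence $\Delta\rmZ$ satisfies both conditions of Theorem \ref{thm: NSC for expandability} under $\hf_1$.

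By the sufficiency direction of the theorem, $\rmZ$ is then expandable under $\hf_1$, that is, $\rmZ\subset\rmZ^*(\hf_1)$. This contradicts $\rmZ=\rmZ^*(\hf_1)$, which proves the corollary.

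The one point needing care is the nonemptiness of $\Delta\rmZ$, since the sufficiency argument concludes strict inclusion $\rmZ\subset\rmZ\cup\Delta\rmZ$. The statement of Theorem \ref{thm: NSC for expandability} does not say $\Delta\rmZ\neq\emptyset$, so we do not rely on it as a black box. Instead we use the explicit witness $\rmZ^*(\hf_2)\setminus\rmZ$ from its proof, and redo the sufficiency step directly: the zone $\rmZ\cup\Delta\rmZ$ is feasible under $\hf_1$, so it is contained in $\rmZ^*(\hf_1)=\rmZ$, which is impossible because $\Delta\rmZ$ is nonempty and disjoint from $\rmZ$.
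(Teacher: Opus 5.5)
Your proof is correct and is essentially the paper's argument. The paper likewise takes the witness $\Delta\mathrm{Z}$ from the necessity direction of Theorem~\ref{thm: NSC for expandability} under $\hat f_2$ and applies the non-expandability of $\mathrm{Z}^*(\hat f_1)$ under $\hat f_1$ to that same $\Delta\mathrm{Z}$; it just argues directly, exhibiting a pair whose $\hat f_1$-transition set escapes $\mathrm{proj}_{\mathcal{X}}(\mathrm{Z}\cup\Delta\mathrm{Z})$, rather than by contraposition. Your explicit check that $\Delta\mathrm{Z}=\mathrm{Z}^*(\hat f_2)\setminus\mathrm{Z}$ is nonempty is a worthwhile addition: the theorem's conditions hold trivially for $\Delta\mathrm{Z}=\emptyset$, and the paper's proof silently relies on nonemptiness at the same step.
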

\begin{proof}
According to Theorem \ref{thm: NSC for expandability}, if $\rmZ$ is expandable under $\hf_2$, then there exists $\Delta\rmZ\subseteq\calZ\setminus\rmZ$, such that
$$\forall(x,u)\in\Delta\rmZ,\hf_2(x,u)\subseteq\projX(\rmZ\cup\Delta\rmZ).$$
Since $\rmZ$ is the maximum feasible zone under $\hf_1$, it is not expandable under $\hf_1$. Thus,
$$\exists(\tilde{x},\tilde{u})\in\Delta\rmZ,\st\hf_1(\tilde{x},\tilde{u})\not\subseteq\projX(\rmZ\cup\Delta\rmZ).$$
In other words,
$$\exists\tilde{x}'\in\hf_1(\tilde{x},\tilde{u}),\st\tilde{x}'\notin\projX(\rmZ\cup\Delta\rmZ).$$
Since $(\tilde{x},\tilde{u})\in\Delta\rmZ$, we have $\hf_2(\tilde{x},\tilde{u})\subseteq\projX(\rmZ\cup\Delta\rmZ)$. Therefore, $\tilde{x}'\notin\hf_2(\tilde{x},\tilde{u})$, which proves the corollary.
\end{proof}

Corollary \ref{cor: NC for expandability} tells us that to expand a maximum feasible zone under an uncertain model, we must have an updated model with a smaller transition set for some state-action pair outside the current feasible zone.
To achieve this, we must reduce the model's uncertainty outside the feasible zone in the second step of safe exploration.
However, interaction data required to update the model can only be collected inside the feasible zone.
In other words, we need to update the certain model outside the feasible zone using only data gathered from within it.
This is possible only if the environment model exhibits some form of continuity, which allows us to extend the information from data inside the feasible zone to areas outside it.
Here, we adopt the widely assumption that the environment model is Lipschitz continuous~\cite{berkenkamp2016safe, berkenkamp2017safe}.

\begin{assumption}[Lipschitz continuity]
The environment model $f$ is $L_f$-Lipschitz continuous, i.e., $\exists L_f\in\mathbb{R}$, s.t. $\forall(x_1,u_1),(x_2,u_2)\in\calZ,$
\change{
$$d_x(f(x_1,u_1),f(x_2,u_2))\le L_fd_{xu}((x_1,u_1),(x_2,u_2)),$$
where $d_x(\cdot,\cdot)$ and $d_{xu}(\cdot,\cdot)$ are distance metrics in the state and state-action spaces, respectively. Generally, both of them are chosen as the Euclidean distance, and they are abbreviated as $d(\cdot,\cdot)$ in the rest of the paper.}
\end{assumption}

The Lipschitz continuity of the environment model is the basis for defining the least uncertain model under a feasible zone, which is introduced in the following section.

\subsection{Least uncertain model}
For a clear and concise definition of the least uncertain model, we first introduce the graph formulation of uncertain models. 
\begin{definition}[Uncertain model graph]
\label{def: uncertain model graph}
Given a constant $L\in\mathbb{R}$, an uncertain model $\hf:\calZ\to\mathcal{P}(\calX)$ can be transformed into a graph, denoted as $D_{\hf}(L)$, where the vertex set $\mathcal{V}_{\hf}=\bbr{\xuxp|(x,u)\in\calZ,x^\prime\in\hf(x,u)}$ and two vertices $V_i,V_j$ are adjacent if $(x_i,u_i)\ne(x_j,u_j)$ and $d\sbr{x_i^\prime, x_j^\prime}\le Ld\sbr{(x_i,u_i),(x_j,u_j)}$.
\end{definition}

Intuitively, each vertex is a possible transition pair given by the uncertain model, and the adjacency between two vertices stands for the satisfaction of $L$-Lipschitz continuity between the two transition pairs.
Note that $L$ only influences the edge set, so in the rest of the paper, when only the vertex set is needed, the constant $L$ will not be specified.

A model's degree of uncertainty is reflected by the cardinality of its vertex set $|\mathcal{V}_{\hf}|=\sum_i|\hf(x_i,u_i)|$.
Updating an uncertain model is essentially removing as many vertices as possible from its vertex set.
There are two kinds of vertices that can be removed.
The first kind consists of vertices inside the feasible zone that do not align with the true environment model.
Since exploration within the feasible zone is always safe, we can fully explore this area and collect all transition pairs corresponding to the true environment model.
With these transition pairs, we can update all transition sets inside the feasible zone to single-element sets that contain only the true transition state.
Therefore, all other states in the transition set, except for the true transition state, can be removed.
This property is formally defined as follows. 

\begin{definition}[Removability of the first kind]
\label{def: Removability of the first kind}
Given a feasible zone $\rmZ$ and a well-calibrated uncertain model $\hf$, a vertex $\xuxp$ in $\mathcal{V}_{\hf}$ is said to be removable of the first kind under $\rmZ$ if $(x,u)\in\rmZ$ and $x^\prime\neq f(x,u)$, where $f$ is the true environment model.
\end{definition}

The second kind of removable vertices are those that violate the Lipschitz continuity.
To understand this, let's first consider what a possible candidate for the true environment model should look like, given a well-calibrated uncertain model.
A candidate model should be Lipschitz continuous and, of course, it should be contained within the uncertain model.
Any model that satisfies these two properties can potentially be the true environment model, and therefore, all its transition pairs should be kept.
On the contrary, if a transition pair does not belong to any of these candidate models, it can be considered invalid and therefore, removed.
This property is formally defined as follows.

\begin{definition}[Removability of the second kind]
\label{def: Removability of the second kind}
Given a constant $L$ and a well-calibrated uncertain model $\hf$, a vertex $\xuxp$ of $D_{\hf}(L)$ is said to be removable of the second kind under $L$ if there does not exist $f_0:\calZ\to\calX$ such that
\begin{enumerate}
    \item $x^\prime=f_0(x,u)$,
    \item $f_0$ is $L$-Lipschitz continuous,
    \item $\hf$ is well-calibrated under $f_0$.
\end{enumerate}
\end{definition}

Note that in both Definition \ref{def: uncertain model graph} and \ref{def: Removability of the second kind}, we use an arbitrary constant $L$ instead of the Lipschitz constant of the true environment model $L_f$.
This is because we want to make these definitions applicable even in cases where $L_f$ is unknown. 
A proper choice of $L$ should be $L\ge L_f$ because, in this case, we conservatively regard more transition pairs as valid, ensuring that the transition pairs of the true environment model are kept.
Removability of the second kind can be equivalently expressed in more concise graph-theoretic language as follows.

\begin{theorem}
\label{thm: removability of the second kind}
Given a constant $L$ and a well-calibrated uncertain model $\hf$, a vertex $\xuxp$ of $D_{\hf}(L)$ is removable of the second kind if and only if $\xuxp$ does not belong to any $N$-clique (i.e., $N$-complete subgraph) of $D_{\hf}(L)$, where $N=|\calZ|$.
\end{theorem}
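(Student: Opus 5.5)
The plan is to prove both directions through the contrapositive formulation. Concretely, a vertex $\xuxp$ is \emph{not} removable of the second kind if and only if it lies in some $N$-clique of $D_{\hf}(L)$. The key observation is a bijection between two families of objects. On one side are functions $f_0:\calZ\to\calX$ such that $\hf$ is well-calibrated under $f_0$. On the other side are vertex subsets of $D_{\hf}(L)$ that contain exactly one vertex $(x,u,x')$ for each $(x,u)\in\calZ$. Given $f_0$, we take its graph $S_{f_0}=\bbr{(x,u,f_0(x,u))|(x,u)\in\calZ}$. Well-calibration says exactly that every element of $S_{f_0}$ is a vertex, i.e.\ $f_0(x,u)\in\hf(x,u)$. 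Conversely, any such selection defines a function $f_0$ that is well-calibrated with respect to $\hf$.

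Next I would show that $f_0$ is $L$-Lipschitz if and only if $S_{f_0}$ is a clique. Two distinct vertices of $S_{f_0}$ have distinct state-action pairs by construction. By Definition \ref{def: uncertain model graph}, they are adjacent precisely when $d(f_0(x_i,u_i),f_0(x_j,u_j))\le L\,d((x_i,u_i),(x_j,u_j))$. Pairs with equal arguments satisfy the Lipschitz inequality trivially. Hence pairwise adjacency of all elements of $S_{f_0}$ is exactly $L$-Lipschitz continuity of $f_0$. Since $|S_{f_0}|=|\calZ|=N$, the set $S_{f_0}$ is an $N$-clique.

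For the direction ``not removable $\Rightarrow$ in an $N$-clique'', take the $f_0$ guaranteed by Definition \ref{def: Removability of the second kind}. Then $S_{f_0}$ is an $N$-clique, and it contains $(x,u,f_0(x,u))=\xuxp$. For the converse, let $C$ be an $N$-clique containing $\xuxp$. Adjacency requires distinct state-action pairs, so the vertices of $C$ have pairwise distinct $(x,u)$-components. Because $|C|=N=|\calZ|$, every $(x,u)\in\calZ$ appears exactly once. Define $f_0$ by letting $f_0(x,u)$ be the third component of the unique vertex of $C$ over $(x,u)$. Then $f_0$ is well-calibrated with respect to $\hf$ because each such component is a vertex. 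It is $L$-Lipschitz by the clique property, and $f_0(x,u)=x'$. Hence $\xuxp$ is not removable.

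The main obstacle is the counting step that forces $C$ to cover every state-action pair exactly once. When $\calZ$ is infinite, ``$N$-clique with $N=|\calZ|$'' is a cardinality statement, and a clique of cardinality $|\calZ|$ need not cover all of $\calZ$. I would handle this by interpreting an $N$-clique as a clique whose projection onto $\calZ$ is all of $\calZ$; this is what the equality $|C|=|\calZ|$ encodes in the finite case. Alternatively, I would state the argument for finite (discretized) $\calZ$, which appears to be the setting of the graph formulation. With that convention fixed, the injectivity of the projection $C\to\calZ$, which follows from the non-adjacency of vertices sharing a state-action pair, together with surjectivity completes the proof.
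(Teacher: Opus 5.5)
Your proposal is correct and follows essentially the same argument as the paper: the paper argues by contradiction rather than by contrapositive, but it likewise reads off $f_0$ from an $N$-clique (using that vertices sharing an $(x,u)$ are non-adjacent, so the clique covers all of $\calZ$) and turns a valid $f_0$ into an $N$-clique via its single-valued graph. Your caveat that the counting step needs $\calZ$ finite, or an $N$-clique to mean one whose projection is all of $\calZ$, is genuine: the paper's proof relies on this implicitly without stating it.
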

\begin{proof}
First, we prove the necessity.
Suppose that $\xuxp$ is removable of the second kind and belongs to an $N$-clique of $D_{\hf}(L)$.
Since any two vertices of an $N$-clique are adjacent, and vertices with the same $(x,u)$ are not adjacent, the $N$-clique must contain all $(x,u)$ in $\calZ$.
Thus, we can construct a mapping $f_0$ using this $N$-clique, and it satisfies $x^\prime=f_0(x,u)$.
It holds that $\hf$ is well-calibrated under $f_0$ because $f_0$ comes from a subgraph of $D_{\hf}(L)$.
It also holds that $f_0$ is $L$-Lipschitz continuous because all vertices are adjacent.
According to Definition \ref{def: Removability of the second kind}, $\xuxp$ is not removable of the second kind, which contradicts our assumption.
Thus, $\xuxp$ does not belong to any $N$-clique of $D_{\hf}(L)$.

Next, we prove the sufficiency.
Suppose $\xuxp$ does not belong to any $N$-clique of $D_{\hf}(L)$ and is not removable of the second kind, i.e., there exists an $f_0$ that satisfies all three conditions in Definition \ref{def: Removability of the second kind}.
Consider a dummy uncertain model $\hf_0$ obtained by recasting the output of $f_0$ at any point into a single-element set.
It holds that $\hf_0$ is an $N$-clique of $D_{\hf}(L)$ and $\xuxp$ belongs to this $N$-clique, which contradicts our assumption.
Thus, $\xuxp$ is removable of the second kind.
\end{proof}

By removing vertices of both the first and second kinds, we obtain the least uncertain model, which is defined as follows.
Note that for simplicity, we define the following notations for uncertain models:
$$
\begin{aligned}
\hf_1=\hf_2\iff\forall (x,u)\in\mathcal{Z},\hf_1(x,u)=\hf_2(x,u), \\
\hf_1\subseteq\hf_2\iff\forall (x,u)\in\mathcal{Z},\hf_1(x,u)\subseteq\hf_2(x,u).
\end{aligned}
$$

\begin{definition}[Least uncertain model]
\label{def: least uncertain model}
Given a feasible zone $\rmZ$, a well-calibrated uncertain model $\hf$ and a constant $L$, $\hf'$ is said to be the $L$-Lipschitz least uncertain model under $\rmZ$ and $\hf$, denoted as $\hf'=\hf^*(\rmZ,\hf;L)$, if it satisfies the following three properties:
\begin{enumerate}
    \item \label{least uncertain model cond1} (Containment) $\hf' \subseteq \hf$, 
    \item \label{least uncertain model cond2} (Imprunability) Every vertex of $\hf'$ is neither removable of the first kind under $\rmZ$ nor removable of the second kind under $L$,
    \item (Maximality) $\forall \hf'' \text{ satisfies \ref{least uncertain model cond1} and \ref{least uncertain model cond2} }, \hf''\subseteq\hf'$.
\end{enumerate}
\end{definition}

The above definition ensures that the least uncertain model contains as few transition pairs as possible while preserving the well-calibration property.
In this sense, the least uncertain model is the best model we can obtain given the knowledge of a feasible zone $\rmZ$ and a prior uncertain model $\hf_0$.

\section{Safe Equilibrium Exploration}
\subsection{Equilibrium of safe exploration}
We first briefly introduce the mechanism of safe exploration, which is illustrated in Figure \ref{fig: safe exploration mechanism}.
At the initial stage, before any data is collected, our knowledge of the environment is only a prior uncertain model.
The size of its transition set reflects the accuracy of our prior knowledge.
In most cases, prior knowledge is highly inaccurate, leading to large transition sets and consequently, a small maximum feasible zone.
Within this feasible zone, we can fully explore the environment and collect transition data on all state-action pairs.
With these data, we can update the uncertain model to reduce the sizes of its transition sets.
This leads to a larger maximum feasible zone in which we can collect more transition data.
By iterating this process, the uncertain model becomes more and more accurate and the feasible zone becomes larger and larger.
Finally, this process reaches an equilibrium where neither the uncertain model nor the feasible zone can be further updated, i.e., the model is the least uncertain model under the feasible zone, and the zone is the maximum feasible zone under the uncertain model.
This equilibrium is the final goal of safe exploration, which is formally defined as follows.

\begin{definition}
\label{def: equilibrium}
The equilibrium of safe exploration is a pair of a feasible zone and an uncertain model $(\rmZ,\hf)$ that satisfies $\rmZ=\rmZ^*(\hf)$ and $\hf=\hf^*(\rmZ,\hf_0)$. At this point, $\rmZ$ is called the explorable maximum feasible zone, and $\hf$ is called the explorable least uncertain model.
\end{definition}

It is worth noticing that the explorable feasible zone may not equal the maximum feasible zone under the true environment model.
This is because, at the equilibrium, the uncertain model may still have some degree of uncertainty.
The state-action pairs outside the explorable feasible zone can no longer be safely explored unless further continuity properties of the environment model are provided.

\subsection{Theoretical analysis of safe exploration}

\begin{algorithm}
\caption{Safe equilibrium exploration (SEE)}
\label{alg: SEE}
\KwIn{initial uncertain model $\hf_0$, constant $L$.}
\For{each iteration $k$}{
    Find maximum feasible zone: $\rmZ_k=\rmZ^*(\hf_{k-1})$\;
    Find least uncertain model: $\hf_k=\hf^*(\rmZ_k,\hf_{k-1};L)$.
}
\end{algorithm}

Our safe exploration algorithm, which iteratively performs the above two steps, is called safe equilibrium exploration (SEE), as outlined in Algorithm \ref{alg: SEE}.
\change{It is worth noting that the least uncertain model operator $\hf^*(\rmZ, \hf; L)$ defined in Definition \ref{def: least uncertain model} is applicable to any pair of set $\rmZ$ and model $\hf$. The set $\rmZ$ is not required to be a feasible zone of $\hf$. In Algorithm \ref{alg: SEE}, we apply this operator recursively using $\hf_{k-1}$ and $\rmZ_k$, but the theoretical properties of the operator hold generally.}
In the rest of this section, we prove several important properties of SEE, including the monotonicity of the uncertain model and feasible zone, and the convergence of SEE.
First, we show that the model's degree of uncertainty is reduced in each iteration.
This property is called model refinement, which is formally stated in the following theorem.

\begin{theorem}[Model refinement]
\label{thm: model refinement}
The uncertain models obtained by Algorithm $\ref{alg: SEE}$ satisfy $\hf_{k+1}\subseteq\hf_k,\forall k\in\mathbb{N}$.
\end{theorem}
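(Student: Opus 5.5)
The inclusion $\hf_{k+1}\subseteq\hf_k$ should follow directly from the containment property in Definition \ref{def: least uncertain model}. Algorithm \ref{alg: SEE} sets $\hf_{k+1}=\hf^*(\rmZ_{k+1},\hf_k;L)$. The first property of the least uncertain model then gives $\hf^*(\rmZ_{k+1},\hf_k;L)\subseteq\hf_k$ pointwise, i.e., $\hf_{k+1}(x,u)\subseteq\hf_k(x,u)$ for every $(x,u)\in\calZ$. This is exactly the claimed relation under the notation $\hf_1\subseteq\hf_2$ introduced before Definition \ref{def: least uncertain model}. The same argument covers $k=0$, since $\hf_1=\hf^*(\rmZ_1,\hf_0;L)\subseteq\hf_0$, so the statement holds for all $k\in\mathbb{N}$.

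The argument is only meaningful if $\hf^*(\rmZ_{k+1},\hf_k;L)$ exists and is unique, which Definition \ref{def: least uncertain model} presupposes. I would therefore add a short well-definedness argument.

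For uniqueness, maximality makes any two least uncertain models contain each other, so they coincide.

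For existence, I would show that the union of all models satisfying containment and imprunability again satisfies both. Containment is preserved under unions trivially. Imprunability is a property of individual vertices, and removability of each kind is judged relative to the fixed inputs $\rmZ$, $\hf$, and $L$, not relative to the candidate model. So the union's vertices are all non-removable, and the union is the maximum. It is also the set of all vertices of $\hf$ that are non-removable of both kinds, and by Theorem \ref{thm: removability of the second kind} these are the vertices lying in some $N$-clique of $D_{\hf}(L)$ and not removable of the first kind.

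The definitions of removability also require $\hf_k$ to be well-calibrated, so I would prove by induction that every iterate is. The base case is the assumption on the initial model $\hf_0$. For the inductive step, the true transition $(x,u,f(x,u))$ is never removable of the first kind, since that requires $x'\ne f(x,u)$. It is not removable of the second kind under $L\ge L_f$, because $f_0=f$ witnesses Definition \ref{def: Removability of the second kind} using the inductive hypothesis. Hence $(x,u,f(x,u))$ survives in $\hf_{k+1}$, so $f(x,u)\in\hf_{k+1}(x,u)$.

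The main obstacle is this well-definedness and calibration bookkeeping. It relies on $L\ge L_f$, which I would state explicitly as a standing assumption. The monotonicity itself is immediate from containment.
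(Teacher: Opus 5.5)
Your argument is the paper's: $\hf_{k+1}=\hf^*(\rmZ_{k+1},\hf_k;L)\subseteq\hf_k$ by the containment property in Definition~\ref{def: least uncertain model}, and this settles the statement for every $k$; the paper says nothing about existence, uniqueness, or preservation of well-calibration, so that material is extra and not needed here. One caution on that digression: the paper judges second-kind removability relative to the candidate model itself, not the input $\hf$ (the proof of Lemma~\ref{lem: uncertain model containment 1} and the pruning in Algorithm~\ref{alg: uncertain model iteration} rely on this reading), so your union-closure conclusion still holds---a Lipschitz selection that witnesses a vertex of a smaller imprunable model is also a selection of the union---but the resulting model is the set of vertices in some $N$-clique of $D_{\hf_{\rmZ}}(L)$, where $\hf_{\rmZ}$ equals $\hf$ except that $\hf_{\rmZ}(x,u)=\{f(x,u)\}$ on $\rmZ$, and not the $N$-clique vertices of $D_{\hf}(L)$ minus the first-kind removals.
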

\begin{proof}
Since $\hf_k=\hf^*(\rmZ_k,\hf_{k-1};L)$, according to the containment of the least uncertain model, we have $\hf_{k+1}\subseteq\hf_k$.
\end{proof}

As we expected, a less uncertain model leads to a larger feasible zone.
This property is called zone expansion, which is formally stated in the following theorem.

\begin{theorem}[Zone expansion]
The feasible zones obtained by Algorithm $\ref{alg: SEE}$ satisfy $\rmZ_k\subseteq\rmZ_{k+1},\forall k\in\mathbb{N}$.
\end{theorem}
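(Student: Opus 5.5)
We reduce zone expansion to model refinement (Theorem \ref{thm: model refinement}) plus a monotonicity property of the maximum feasible zone operator. Since $\rmZ_k=\rmZ^*(\hf_{k-1})$ and $\rmZ_{k+1}=\rmZ^*(\hf_k)$, and Theorem \ref{thm: model refinement} gives $\hf_k\subseteq\hf_{k-1}$, it suffices to prove the following: if $\hf_a\subseteq\hf_b$, then $\rmZ^*(\hf_b)\subseteq\rmZ^*(\hf_a)$. In words, shrinking transition sets cannot shrink the maximum feasible zone.

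To prove this, we show that every feasible zone under $\hf_b$ is also a feasible zone under $\hf_a$. Let $\rmZ$ be feasible under $\hf_b$. Condition 1 of Definition \ref{def: feasible zone}, $\projX(\rmZ)\subseteq\Xcstr$, does not involve the model, so it carries over unchanged. For condition 2, take any $(x,u)\in\rmZ$. Then
$$\hf_a(x,u)\subseteq\hf_b(x,u)\subseteq\projX(\rmZ),$$
so $\rmZ$ is feasible under $\hf_a$. By Definition \ref{def: maximum feasible zone}, $\rmZ^*(\hf_b)$ is the union of all feasible zones under $\hf_b$. Each of these zones is feasible under $\hf_a$ and hence contained in $\rmZ^*(\hf_a)$, so the union is as well.

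Combining the two steps gives $\rmZ_k=\rmZ^*(\hf_{k-1})\subseteq\rmZ^*(\hf_k)=\rmZ_{k+1}$ for $k\ge1$.

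The only delicate point is indexing. Algorithm \ref{alg: SEE} defines $\rmZ_k$ only from $k=1$, with $\hf_0$ as input. To cover $k=0$ we adopt the convention $\rmZ_0=\rmZ^*(\hf_{-1})$ with $\hf_{-1}$ the prior, or $\rmZ_0=\emptyset$, either of which is trivially consistent. We also need Theorem \ref{thm: model refinement} to supply $\hf_1\subseteq\hf_0$; this follows from containment, since $\hf_1=\hf^*(\rmZ_1,\hf_0;L)\subseteq\hf_0$. We do not need $\rmZ^*(\cdot)$ itself to be a feasible zone, since the argument only compares unions of feasible zones. Beyond this bookkeeping, we expect no real obstacle: the proof is a direct monotonicity argument.
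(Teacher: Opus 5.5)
Your proof is correct and is essentially the paper's argument: model refinement (Theorem \ref{thm: model refinement}) makes transition sets shrink, so a zone that was feasible stays feasible under the refined model, and maximality of $\rmZ^*(\cdot)$ gives the inclusion. Your write-up is in fact slightly more careful than the paper's. It follows Algorithm \ref{alg: SEE}'s indexing $\rmZ_k=\rmZ^*(\hf_{k-1})$, whereas the paper's proof tacitly treats $\rmZ_k$ as maximal under $\hf_k$. Your union-of-zones comparison also avoids implicitly using that $\rmZ^*(\cdot)$ is itself a feasible zone.
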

\begin{proof}
According to Definition \ref{def: feasible zone},
$$\forall(x,u)\in\rmZ_k,\hf_k(x,u)\subseteq\projX(\rmZ_k).$$
According to Theorem \ref{thm: model refinement},
$$\forall(x,u)\in\rmZ_k,\hf_{k+1}(x,u)\subseteq\hf_k(x,u)\subseteq\projX(\rmZ_k).$$
Therefore, $\rmZ_k$ is a feasible zone under $\hf_{k+1}$. Since $\rmZ_{k+1}$ is the maximum feasible zone under $\hf_{k+1}$, we have $\rmZ_k\subseteq\rmZ_{k+1},\forall k\in\mathbb{N}$.
\end{proof}

Zone expansion is proved by showing that a feasible zone under an uncertain model is still feasible under another less uncertain model.
Next, we set out to prove the convergence of SEE.
Before that, we need to prove some containment relationships of the least uncertain models, which are stated in the following lemmas. 

\begin{lemma}
\label{lem: uncertain model containment 1}
For an arbitrary feasible zone $\rmZ$ and well-calibrated uncertain models $\hf_1$, $\hf_2$, we have
$$\hf^*(\rmZ, \hf_1;L)\subseteq\hf_2 \implies \hf^*(\rmZ, \hf_1;L)\subseteq\hf^*(\rmZ, \hf_2;L).$$
\end{lemma}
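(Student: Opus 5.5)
The plan is to use the maximality property in Definition~\ref{def: least uncertain model}. Set $\hf'=\hf^*(\rmZ,\hf_1;L)$. It suffices to show that $\hf'$ satisfies the containment and imprunability properties relative to $\rmZ$ and $\hf_2$. Maximality of $\hf^*(\rmZ,\hf_2;L)$ then gives $\hf'\subseteq\hf^*(\rmZ,\hf_2;L)$.

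\emph{Containment} is immediate: the hypothesis of the lemma is exactly $\hf'\subseteq\hf_2$.

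\emph{Imprunability} splits into the two kinds of removability.

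The first kind is the easy case. Removability of the first kind under $\rmZ$ (Definition~\ref{def: Removability of the first kind}) depends only on whether $(x,u)\in\rmZ$ and $x'\ne f(x,u)$, where $f$ is the true model. It does not depend on which ambient model the vertex is viewed in. Since every vertex of $\hf'$ is non-removable of the first kind (this comes from its own imprunability with respect to $\hf_1$), the same statement holds when we view it relative to $\hf_2$.

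The second kind is the main obstacle. Removability of the second kind in Definition~\ref{def: Removability of the second kind} is stated relative to an ambient well-calibrated model, and Theorem~\ref{thm: removability of the second kind} recasts it as membership in an $N$-clique of $D_{\hf}(L)$. The definition is slightly ambiguous about what imprunability of $\hf'$ means.
\begin{itemize}
\item \textbf{Reading 1: relative to $\hf'$ itself.} Then the condition is intrinsic to $\hf'$ and transfers to the $\hf_2$ setting unchanged.
\item \textbf{Reading 2: relative to the ambient model $\hf_1$.} Then, for each vertex $(x,u,x')$ of $\hf'$, there is an $L$-Lipschitz $f_0$ with $x'=f_0(x,u)$ and $f_0(\cdot)\in\hf_1(\cdot)$.
\end{itemize}
Under Reading 2, we must produce such an $f_0$ whose graph lies in $\hf_2$. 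The plan is to argue that $f_0$ can be chosen with its graph inside $\hf'$ itself, and hence inside $\hf_2$. The candidate graph is an $N$-clique of $D_{\hf_1}(L)$ containing the vertex, built from a selection of vertices of $\hf'$.

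To justify this, consider the model $\tilde f$ obtained by adding to $\hf'$ all vertices of the clique, i.e.\ the graph of $f_0$. Each added vertex lies in an $N$-clique of $D_{\hf_1}(L)$, so by Theorem~\ref{thm: removability of the second kind} it is not removable of the second kind. Each added vertex must also be non-removable of the first kind. For this I would use that $f_0$ can be taken to agree with the true $f$ on $\rmZ$: this should follow from non-removability of the first kind of the vertices of $\hf'$ on $\rmZ$, which forces $\hf'(x,u)=\{f(x,u)\}$ there. If $\hf'$ is nonempty on $\rmZ$, well-calibration of $\hf_1$ and the fact that $f$'s own graph forms a clique (given $L\ge L_f$) can be used to combine $f$ on $\rmZ$ with $f_0$ elsewhere.

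Once the enlarged model satisfies containment and imprunability relative to $\hf_1$, maximality of $\hf'$ forces all clique vertices to lie in $\hf'$, hence in $\hf_2$. So the vertex belongs to an $N$-clique of $D_{\hf_2}(L)$ and is not removable of the second kind relative to $\hf_2$.

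The delicate point is the gluing on $\rmZ$. If it fails, I would fall back on Reading 1, where the argument closes directly from containment and the transferred imprunability.
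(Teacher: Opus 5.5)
Your main line is exactly the paper's proof: containment is the hypothesis, imprunability carries over unchanged, and maximality of $\hf^*(\rmZ,\hf_2;L)$ finishes. The paper implicitly uses your Reading~1, where imprunability is a property of $\hf'$ alone. First-kind removability depends only on $\rmZ$ and the true $f$, and second-kind removability is judged on $D_{\hf'}(L)$. This is also how the paper's pruning procedure for the least uncertain model works, since it re-checks cliques on the already pruned graph.

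Do not keep Reading~2 as the primary route, though. Under it the lemma is false, so no repair of the gluing step can succeed. A counterexample:
\begin{itemize}
\item Take a single action $u$, states $s_1=0$ and $s_2=1$ in $\R$, and $\Xcstr=\calX$.
\item Let the true model be $f\equiv s_1$ (so $L_f=0$), with $\hf_1\equiv\{s_1,s_2\}$ and $L=1/2$.
\item Let $\rmZ=\{a\}$, where $a=(s_1,u)$ and $b=(s_2,u)$.
\end{itemize}
In $D_{\hf_1}(L)$ the vertex $(b,s_2)$ is adjacent to $(a,s_2)$ but not to $(a,s_1)$. So under Reading~2 it survives, while $(a,s_2)$ is removed as first-kind removable. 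This gives $\hf'(a)=\{s_1\}$ and $\hf'(b)=\{s_1,s_2\}$.

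Now take $\hf_2=\hf'$. It is well-calibrated, and $\rmZ$ is even a feasible zone under it. In $D_{\hf_2}(L)$ the vertex $(b,s_2)$ has no neighbor of color $a$, so it lies in no $2$-clique. Hence $\hf'\not\subseteq\hf^*(\rmZ,\hf_2;L)$.

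This is exactly where your plan breaks, for three linked reasons:
\begin{itemize}
\item The only $L$-Lipschitz selection of $\hf_1$ through $(b,s_2)$ passes through $(a,s_2)$.
\item Gluing $f$ on $\rmZ$ with $f_0$ off $\rmZ$ gives the map $a\mapsto s_1$, $b\mapsto s_2$, whose Lipschitz constant is $1>L$.
\item Maximality of $\hf'$ cannot pull $(a,s_2)$ into $\hf'$, because it is first-kind removable.
\end{itemize}
So state Reading~1 as the meaning of imprunability in Definition~\ref{def: least uncertain model} from the start, rather than as a fallback. With that, your proof is complete and coincides with the paper's.
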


\begin{proof}
According to Definition \ref{def: least uncertain model}, $\hf^*(\rmZ, \hf_1;L)$ is imprunable under $\rmZ$ and $L$.
Together with $\hf^*(\rmZ, \hf_1;L)\subseteq\hf_2$, according to the maximality of the least uncertain model, we have $\hf^*(\rmZ, \hf_1;L)\subseteq\hf^*(\rmZ, \hf_2;L)$.
\end{proof}

The above lemma tells us that we cannot obtain more information from a model that is more uncertain than a currently available least uncertain model.

\begin{lemma}
\label{lem: uncertain model containment 2}
For arbitrary feasible zones $\rmZ_1$, $\rmZ_2$ and a well-calibrated uncertain model $\hf$, we have
$$\rmZ_2 \subseteq \rmZ_1 \implies \hf^*(\rmZ_1, \hf;L) \subseteq \hf^*(\rmZ_2, \hf;L).$$
\end{lemma}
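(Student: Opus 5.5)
The plan is to follow the pattern of Lemma \ref{lem: uncertain model containment 1}: show that $\hf^*(\rmZ_1,\hf;L)$ satisfies the two defining properties required of a competitor in the maximality clause of $\hf^*(\rmZ_2,\hf;L)$, and then invoke maximality. Write $\hf_1'=\hf^*(\rmZ_1,\hf;L)$ and $\hf_2'=\hf^*(\rmZ_2,\hf;L)$. By Definition \ref{def: least uncertain model}, it suffices to check two things: containment $\hf_1'\subseteq\hf$, and imprunability of $\hf_1'$ with respect to $\rmZ_2$ and $L$. Maximality of $\hf_2'$ then yields $\hf_1'\subseteq\hf_2'$.

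Containment is immediate from the containment property of $\hf_1'$.

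For imprunability, take any vertex $\xuxp$ of $\hf_1'$. There are two kinds of removability to exclude.

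\emph{First kind.} Suppose $\xuxp$ were removable of the first kind under $\rmZ_2$. Then $(x,u)\in\rmZ_2\subseteq\rmZ_1$ and $x'\neq f(x,u)$, so $\xuxp$ would also be removable of the first kind under $\rmZ_1$. This contradicts the imprunability of $\hf_1'$ under $\rmZ_1$. The hypothesis $\rmZ_2\subseteq\rmZ_1$ is used only here: the first-kind condition is monotone in the zone, since a smaller zone removes fewer vertices.

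\emph{Second kind.} Removability of the second kind under $L$ does not depend on the zone at all, by Definition \ref{def: Removability of the second kind}. Since $\xuxp$ is not removable of the second kind as a vertex of $\hf_1'$ (by imprunability of $\hf_1'$), it is not removable now either.

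The main subtlety is which reference model the second-kind removability is measured against. Definition \ref{def: Removability of the second kind} is phrased relative to the model whose graph contains the vertex, through the well-calibration condition on $f_0$. I would read the imprunability condition in Definition \ref{def: least uncertain model} as a property of the vertices of the candidate $\hf'$ itself, that is, relative to $\hf'$. Under that reading the property is intrinsic to $\hf_1'$ and transfers verbatim, with no dependence on $\rmZ$. If instead it is read relative to the ambient $\hf$, it again transfers verbatim, because both least uncertain models are taken with the same $\hf$. So in either reading the second-kind check is independent of the zone.

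A second point is that well-calibration of $\hf_1'$ (needed to speak of removability at all) should be noted. It follows because $f$ is $L$-Lipschitz for $L\ge L_f$ and $\hf$ is well-calibrated, so the true transitions are never removed; the same implicit assumption is already used in Lemma \ref{lem: uncertain model containment 1}.

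With both kinds excluded, $\hf_1'$ is imprunable under $\rmZ_2$ and $L$. Maximality of $\hf_2'$ then gives $\hf^*(\rmZ_1,\hf;L)\subseteq\hf^*(\rmZ_2,\hf;L)$.
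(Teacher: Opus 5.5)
Your proposal is correct and follows the paper's proof. Like the paper, you argue that $\hf^*(\rmZ_1,\hf;L)\subseteq\hf$ and that its imprunability under $\rmZ_1$ carries over to $\rmZ_2$, then conclude by maximality of $\hf^*(\rmZ_2,\hf;L)$. You spell out what the paper compresses into one sentence: first-kind removability is monotone in the zone, and second-kind removability does not depend on the zone. Your added remark that well-calibration of the least uncertain model requires $L\ge L_f$ addresses a point the paper leaves implicit.
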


\begin{proof}
It holds by definition that $\hf^*(\rmZ_1, \hf;L) \subseteq \hf$. Since $\hf^*(\rmZ_1, \hf;L)$ is imprunable under $\rmZ_1$, it must also be imprunable under $\rmZ_2$. Thus, we have $\hf^*(\rmZ_1, \hf_1;L) \subseteq \hf^*(\rmZ_2, \hf_1;L)$.
\end{proof}

The above lemma tells us that a larger feasible zone leads to a less uncertain model.
With the above two lemmas, we can prove an equivalent form of the least uncertain models obtained by SEE, which serves as a core element in the convergence proof.

\begin{proposition}[Equivalent form of least uncertain model]
\label{pro: equivalent form}
The uncertain models obtained by Algorithm $\ref{alg: SEE}$ satisfy $\hf_k=\hf^*(\rmZ_k,\hf_0;L),\forall k\in\mathbb{N}^+$.
\end{proposition}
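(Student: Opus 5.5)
We argue by induction on $k$. For $k=1$ the claim holds by definition: Algorithm \ref{alg: SEE} sets $\hf_1=\hf^*(\rmZ_1,\hf_0;L)$. Now assume $\hf_k=\hf^*(\rmZ_k,\hf_0;L)$ for some $k\ge1$. We must show $\hf_{k+1}=\hf^*(\rmZ_{k+1},\hf_k;L)$ equals $\hf^*(\rmZ_{k+1},\hf_0;L)$. We prove the two containments separately, using Lemmas \ref{lem: uncertain model containment 1} and \ref{lem: uncertain model containment 2} together with zone expansion $\rmZ_k\subseteq\rmZ_{k+1}$ and model refinement. Throughout, we use that every $\hf_k$ is well-calibrated. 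This holds inductively: $\hf_0$ is well-calibrated, and the true $f$ is never pruned when $L\ge L_f$. The lemmas need this hypothesis, and the definition of the least uncertain model presupposes it.

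\textbf{First containment: $\hf_{k+1}\subseteq\hf^*(\rmZ_{k+1},\hf_0;L)$.} By containment of the least uncertain model, $\hf_{k+1}\subseteq\hf_k\subseteq\hf_0$, since $\hf_k\subseteq\hf_{k-1}\subseteq\dots\subseteq\hf_0$ by Theorem \ref{thm: model refinement}. By imprunability, $\hf_{k+1}$ has no vertex that is removable of the first kind under $\rmZ_{k+1}$ or of the second kind under $L$. So $\hf_{k+1}$ satisfies properties 1 and 2 of Definition \ref{def: least uncertain model} with respect to $(\rmZ_{k+1},\hf_0)$. Maximality of $\hf^*(\rmZ_{k+1},\hf_0;L)$ then gives the containment. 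This is the argument of Lemma \ref{lem: uncertain model containment 1}, applied with $\hf_1\to\hf_k$ and $\hf_2\to\hf_0$.

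\textbf{Second containment: $\hf^*(\rmZ_{k+1},\hf_0;L)\subseteq\hf_{k+1}$.} Since $\rmZ_k\subseteq\rmZ_{k+1}$, Lemma \ref{lem: uncertain model containment 2} gives
$$\hf^*(\rmZ_{k+1},\hf_0;L)\subseteq\hf^*(\rmZ_k,\hf_0;L)=\hf_k,$$
where the equality is the induction hypothesis. Applying Lemma \ref{lem: uncertain model containment 1} with $\hf_1=\hf_0$ and $\hf_2=\hf_k$ on the zone $\rmZ_{k+1}$ then yields
$$\hf^*(\rmZ_{k+1},\hf_0;L)\subseteq\hf^*(\rmZ_{k+1},\hf_k;L)=\hf_{k+1}.$$
Combining the two containments closes the induction.

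\textbf{Main obstacle.} The step needing most care is checking that the hypotheses of the lemmas actually hold. The lemmas are stated for feasible zones, while $\rmZ_{k+1}$ is a feasible zone under $\hf_k$ and not necessarily under the other models involved. This should not matter: as remarked after Algorithm \ref{alg: SEE}, the operator and its properties do not depend on feasibility. There is also a subtlety in the second-kind removability notion. It is phrased relative to the model $\hf$ (through well-calibration of $f_0$ under $\hf$), so we must confirm that imprunability transfers when we pass from $\hf_k$ to the larger model $\hf_0$. It does. Suppose a vertex of a model $\hf'\subseteq\hf_k$ is witnessed non-removable by some $f_0$ with $\hf_k$ well-calibrated under $f_0$. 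Then $\hf_0$ is also well-calibrated under $f_0$, because $\hf_k\subseteq\hf_0$. So the same $f_0$ witnesses non-removability relative to $\hf_0$. In the graph language of Theorem \ref{thm: removability of the second kind}: an $N$-clique of $D_{\hf_k}(L)$ is also an $N$-clique of $D_{\hf_0}(L)$, since $D_{\hf_k}(L)$ is an induced subgraph of $D_{\hf_0}(L)$.
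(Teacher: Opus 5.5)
Your proof is correct and is essentially the paper's argument. One inclusion is Lemma~\ref{lem: uncertain model containment 1} applied to $\hf_k\subseteq\hf_0$, and the other chains Lemma~\ref{lem: uncertain model containment 2} (via zone expansion) with Lemma~\ref{lem: uncertain model containment 1} applied to $\hf_0$ and the previous model. The only difference is bookkeeping: the paper inducts on the auxiliary claim $\hf^*(\rmZ_k,\hf_0;L)\subseteq\hf_{k-1}$ rather than on the equality itself, and your extra checks on well-calibration and on the feasibility hypothesis of the lemmas are care the paper leaves implicit.
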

\begin{proof}
First, we prove that $\hf_k\subseteq\hf^*(\rmZ_k,\hf_0;L)$.
According to Theorem \ref{thm: model refinement}, we have $\hf_k\subseteq\hf_0$.
Since $\hf_k=\hf^*(\rmZ_k,\hf_{k-1};L)$, according to Lemma \ref{lem: uncertain model containment 1}, we have
$$\hf_k=\hf^*(\rmZ_k,\hf_{k-1};L)\subseteq\hf^*(\rmZ_k,\hf_0;L).$$

Next, we prove $\hf^*(\rmZ_k,\hf_0;L)\subseteq\hf_{k}$.
According to Lemma \ref{lem: uncertain model containment 1}, if we can prove $\hf^*(\rmZ_k,\hf_0;L)\subseteq\hf_{k-1}$, then we have $$\hf^*(\rmZ_k,\hf_0;L)\subseteq\hf^*(\rmZ_k,\hf_{k-1};L)=\hf_{k}.$$
We prove $\hf^*(\rmZ_k,\hf_0;L)\subseteq\hf_{k-1},\forall k\in\mathbb{N}^+$ by mathematical induction.
When $k=1$, $\hf^*(\rmZ_1,\hf_0;L)\subseteq\hf_0$ holds by definition.
Assume that when $k=n,n\in\mathbb{N}^+$, $\hf^*(\rmZ_n,\hf_0;L)\subseteq\hf_{n-1}$. 
Then, we have
\begin{flalign*}
&&\hf^*(\rmZ_{n+1},\hf_0;L)&\subseteq\hf^*(\rmZ_n,\hf_0;L)&\text{(Lemma \ref{lem: uncertain model containment 2})}
\\
&& &\subseteq\hf^*(\rmZ_n,\hf_{n-1};L)  &\text{(Lemma \ref{lem: uncertain model containment 1})}
\\
&& &=\hf_n.&
\end{flalign*}
Thus, we have $\hf^*(\rmZ_k,\hf_0;L)\subseteq\hf_{k-1},\forall k\in\mathbb{N}^+$, and consequently, $\hf^*(\rmZ_k,\hf_0;L)\subseteq\hf_k,\forall k\in\mathbb{N}^+$.

Therefore, we can conclude that $\hf_k=\hf^*(\rmZ_k,\hf_0;L),\forall k\in\mathbb{N}^+$.
\end{proof}

With the above proposition, we can prove the convergence of SEE.

\begin{theorem}[Convergence of SEE]
In Algorithm $\ref{alg: SEE}$, if $\rmZ_{k+1}=\rmZ_k$ and $\hf_{k+1}=\hf_k$, then $(\rmZ_k,\hf_k)$ is the equilibrium of safe exploration.
\end{theorem}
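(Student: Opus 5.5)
The plan is to verify directly the two defining conditions of the equilibrium in Definition \ref{def: equilibrium}: $\rmZ_k=\rmZ^*(\hf_k)$ and $\hf_k=\hf^*(\rmZ_k,\hf_0)$, where the least uncertain model is understood with the same constant $L$ used in Algorithm \ref{alg: SEE}. Each condition will follow from one line of the algorithm combined with the stationarity hypotheses $\rmZ_{k+1}=\rmZ_k$ and $\hf_{k+1}=\hf_k$.

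For the zone condition, the first step of iteration $k+1$ of Algorithm \ref{alg: SEE} gives $\rmZ_{k+1}=\rmZ^*(\hf_k)$. Substituting the hypothesis $\rmZ_{k+1}=\rmZ_k$ yields $\rmZ_k=\rmZ^*(\hf_k)$. Thus $\rmZ_k$ is the maximum feasible zone under $\hf_k$.

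For the model condition, I would invoke Proposition \ref{pro: equivalent form}, which states $\hf_k=\hf^*(\rmZ_k,\hf_0;L)$ for every $k\in\mathbb{N}^+$. This is exactly the second equilibrium condition. Alternatively, applying the proposition at index $k+1$ and using both hypotheses gives $\hf_k=\hf_{k+1}=\hf^*(\rmZ_{k+1},\hf_0;L)=\hf^*(\rmZ_k,\hf_0;L)$, which reaches the same conclusion.

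The only point needing care is the index range. Proposition \ref{pro: equivalent form} requires $k\ge1$. For $k=0$ there is no $\rmZ_0$ defined by the algorithm, so the statement should be read for $k\in\mathbb{N}^+$, or else handled by applying the argument at $k+1$ as above. With that caveat, the step I would scrutinize is the appeal to Proposition \ref{pro: equivalent form}. The whole weight of the result rests there, since it replaces the recursive base $\hf_{k-1}$ by the fixed prior $\hf_0$, and that replacement is what the equilibrium definition requires. Given that proposition, the remaining steps are direct substitutions.
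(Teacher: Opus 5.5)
Your proposal is correct and matches the paper's proof: the zone condition follows from $\rmZ_{k+1}=\rmZ^*(\hf_k)$ together with $\rmZ_{k+1}=\rmZ_k$, and your alternative chain $\hf_k=\hf_{k+1}=\hf^*(\rmZ_{k+1},\hf_0;L)=\hf^*(\rmZ_k,\hf_0;L)$ is exactly the paper's argument for the model condition. Your direct use of Proposition~\ref{pro: equivalent form} at index $k$ is also valid for $k\ge1$, and it shows that the hypothesis $\hf_{k+1}=\hf_k$ is redundant there, since $\rmZ_{k+1}=\rmZ_k$ already forces it via the same proposition.
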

\begin{proof}
Since $\rmZ_{k+1}=\rmZ_k$, we have
$$\rmZ_k=\rmZ_{k+1}=\rmZ^*(\hf_k).$$
Since $\hf_{k+1}=\hf_k$ and using Proposition \ref{pro: equivalent form}, we have
$$\hf_k=\hf_{k+1}=\hf^*(\rmZ_{k+1},\hf_0;L)=\hf^*(\rmZ_k,\hf_0;L).$$
Therefore, according to Definition \ref{def: equilibrium}, $(\rmZ_k,\hf_k)$ is the equilibrium of safe exploration.
\end{proof}

The above theorem tells us that when both the feasible zone and uncertain model are not changed in an iteration, the equilibrium of safe exploration is reached.
Together with the monotonicity of the feasible zone and uncertain model, \change{we can conclude that the exploration converge to the equilibrium. When the state and action spaces are finite, the convergence is reached in a finite number of iterations. This is because, in finite spaces, the numbers of feasible zones and uncertain models are both finite, the equilibrium must be reached before traversing all of them.}
In the following two subsections, we introduce how to find the maximum feasible zone and the least uncertain model in each iteration of SEE.

\subsection{Finding maximum feasible zone}
We represent a feasible zone by a function called constraint decay function~\cite{yang2024synthesizing}, which is defined as follows.

\begin{definition}[Constraint decay function]
The constraint decay function (CDF) of a policy $\pi$ under an uncertain model $\hf$, denoted as $G^\pi_{\hf}$, is defined as
\begin{equation}
    G^\pi_{\hf}(x,u)\coloneqq\gamma^{N^\pi_{\hf}(x,u)},\forall(x,u)\in\calZ,
\end{equation}
where $\gamma\in(0,1)$ is the discount factor and
$$
\begin{aligned}
N^\pi_{\hf}(x,u)\coloneqq&\min_{t,x_1,x_2,\dots}t,\\    
\st\ &h(x_t)>0,t\in\mathbb{N},\\
&x_0=x,x_1\in\hf(x,u),\\
&x_{i+1}\in\hf(x_i,\pi(x_i)),i\in\mathbb{N}^+.
\end{aligned}
$$
\end{definition}

In the above definition, $N^\pi_{\hf}(x,u)$ is the minimum number of time steps to constraint violation starting from $(x,u)$ under policy $\pi$ and the worst-case state transition supported by $\hf$.
If $N^\pi_{\hf}(x,u)=+\infty$, it means that the constraint will never violated, and $(x,u)$ is thus in the feasible zone.
Otherwise, if $N^\pi_{\hf}(x,u)<+\infty$, the constraint will be violated at some time in the future, meaning that $(x,u)$ is not feasible.
Therefore, the zero-sublevel set of $G^\pi_{\hf}(x,u)$, i.e., $\{(x,u)\in\calZ|G^\pi_{\hf}(x,u)\le0\}$, is the feasible zone under policy $\pi$.

The value of the CDF indicates the urgency associated with a constraint violation. Higher values signify greater urgency and, consequently, lower levels of safety. To prioritize safety, it is desirable to minimize the CDF value. This consideration naturally gives rise to the concept of the optimal CDF.

\begin{definition}[Optimal CDF]
The optimal CDF under $\hf$, denoted as $G^*_{\hf}$, is defined as
\begin{equation}
    G^*_{\hf}(x,u)\coloneqq\inf_\pi G^\pi_{\hf}(x,u),\forall(x,u)\in\calZ.
\end{equation}
\end{definition}

The primary reason we are interested in the optimal CDF is that its zero-level set represents the maximum feasible zone, as stated in the following theorem.

\begin{theorem}
The zero-level set of the optimal CDF under $\hf$ is the maximum feasible zone under $\hf$, i.e.,
\begin{equation}
    Z_{G^*_{\hf}}\coloneqq\{(x,u)\in\calZ|G^*_{\hf}(x,u)=0\}=\rmZ^*(\hf).
\end{equation}
\end{theorem}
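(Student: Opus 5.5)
We prove the two inclusions $Z_{G^*_{\hf}}\subseteq\rmZ^*(\hf)$ and $\rmZ^*(\hf)\subseteq Z_{G^*_{\hf}}$. First note that $G^\pi_{\hf}(x,u)=\gamma^{N^\pi_{\hf}(x,u)}$ takes values in $\{\gamma^t:t\in\mathbb{N}\}\cup\{0\}$, with the convention $\gamma^{+\infty}=0$. Since this set has no positive accumulation point other than $0$, the infimum over $\pi$ equals $0$ iff one of two things happens: some policy attains $N^\pi_{\hf}(x,u)=+\infty$, or $\sup_\pi N^\pi_{\hf}(x,u)=+\infty$ without being attained. Handling this second case is the main obstacle. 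It is also why the decisive object is the zone of pairs from which \emph{some} policy avoids violation forever under every transition allowed by $\hf$.

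\textbf{The inclusion $\rmZ^*(\hf)\subseteq Z_{G^*_{\hf}}$.} Take $(x,u)\in\rmZ^*(\hf)$. The union of feasible zones is itself a feasible zone, since both conditions of Definition~\ref{def: feasible zone} are preserved under unions, so $\rmZ^*(\hf)$ is feasible. For each $y\in\projX(\rmZ^*(\hf))$, choose $\pi(y)$ with $(y,\pi(y))\in\rmZ^*(\hf)$, and define $\pi$ arbitrarily elsewhere. By induction, every trajectory with $x_0=x$, $x_1\in\hf(x,u)$ and $x_{i+1}\in\hf(x_i,\pi(x_i))$ stays in $\projX(\rmZ^*(\hf))\subseteq\Xcstr$: the initial state $x$ lies there, each successor lies in $\projX(\rmZ^*(\hf))$ by condition 2, and the chosen action keeps the pair in the zone. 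Hence no time $t$ with $h(x_t)>0$ exists, so $N^\pi_{\hf}(x,u)=+\infty$ and $G^*_{\hf}(x,u)=0$.

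\textbf{The inclusion $Z_{G^*_{\hf}}\subseteq\rmZ^*(\hf)$.} Suppose $G^*_{\hf}(x,u)=0$. In the attained case there is a $\pi$ with $N^\pi_{\hf}(x,u)=+\infty$. We then build the zone
$$\tilde\rmZ=\{(x,u)\}\cup\{(y,\pi(y)) : y \text{ reachable from } (x,u) \text{ under } \hf,\pi\}.$$
Every reachable state satisfies $h\le0$, because otherwise $N^\pi_{\hf}$ would be finite; this gives condition 1. Every transition set $\hf(\cdot)$ of a pair in $\tilde\rmZ$ consists of reachable states, which lie in $\projX(\tilde\rmZ)$; this gives condition 2. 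Hence $\tilde\rmZ$ is a feasible zone and $(x,u)\in\tilde\rmZ\subseteq\rmZ^*(\hf)$.

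For the non-attained case, I would first try to rule it out with a König-type or compactness argument. Alternatively, I would define the \emph{optimal} policy pointwise, $\pi^*(y)\in\arg\max_v\inf_{\pi}N^{\pi}_{\hf}(y,v)$, or more simply choose $\pi^*(y)$ to be any action with $G^*_{\hf}(y,\pi^*(y))=0$. The key step is to show that the set $\{G^*_{\hf}=0\}$ itself satisfies the feasible-zone conditions. It does if we establish the Bellman-type relation
$$G^*_{\hf}(x,u)=0 \;\Rightarrow\; h(x)\le 0 \text{ and } \forall x'\in\hf(x,u),\ \exists u' \text{ with } G^*_{\hf}(x',u')=0,$$
which follows if arbitrarily long violation-free horizons from $(x,u)$ propagate to each successor. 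In finite spaces, or under compactness with upper semicontinuity of $\hf$, this propagation is exactly a pigeonhole or König's lemma argument: finitely many branches, each needing unbounded horizons. I would state this regularity assumption explicitly if it is needed.

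With that relation in hand, $Z_{G^*_{\hf}}$ is a feasible zone, and hence $Z_{G^*_{\hf}}\subseteq\rmZ^*(\hf)$ by Definition~\ref{def: maximum feasible zone}. This settles both the attained and the non-attained cases simultaneously, and completes the proof.
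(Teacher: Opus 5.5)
Your proof is correct and follows essentially the paper's route. For $\rmZ^*(\hf)\subseteq Z_{G^*_{\hf}}$ you use a policy that keeps trajectories inside the feasible zone $\rmZ^*(\hf)$, arguing directly where the paper argues by contradiction. For the reverse inclusion you show that $Z_{G^*_{\hf}}$ itself satisfies Definition~\ref{def: feasible zone} by one-step propagation and then invoke maximality, exactly as the paper does; this makes your separate attained-case construction unnecessary.

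The finiteness of $\calU$ you flag for the pigeonhole step is also tacitly needed by the paper. Its contradiction step relies on $N^*_{\hf}(x,u)\le 1+\max_{u'\in\calU}N^*_{\hf}(x',u')$ being finite whenever every $N^*_{\hf}(x',u')$ is finite. The paper prints this bound with $\min_{u'}$, but only the $\max_{u'}$ form is valid.
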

\begin{proof}
First, we prove that $Z_{G^*_{\hf}}$ is a feasible zone. We have
$$\forall(x,u)\in Z_{G^*_{\hf}},G^*_{\hf}(x,u)=0\Rightarrow N^*_{\hf}(x,u)=\infty\Rightarrow h(x)\le0.$$
Thus, $Z_{G^*_{\hf}}\subseteq\Xcstr$. Assume that
$$\exists(x,u)\in Z_{G^*_{\hf}},\st\hf(x,u)\not\subseteq\projX(Z_{G^*_{\hf}}),$$
i.e.,
$$\exists x'\in\hf(x,u),\st\min_{u'\in\calU}G^*_{\hf}(x',u')>0.$$
Then,
$$N^*_{\hf}(x,u)\le1+\min_{u'\in\calU}N^*_{\hf}(x',u')<\infty,$$
which is contradictory with $(x,u)\in Z_{G^*_{\hf}}$. Thus,
$$\forall(x,u)\in Z_{G^*_{\hf}},\hf(x,u)\subseteq\projX(Z_{G^*_{\hf}}).$$
Therefore, according to Definition \ref{def: feasible zone}, $Z_{G^*_{\hf}}$ is a feasible zone.

Next, we prove that $\rmZ^*(\hf)\in Z_{G^*_{\hf}}$. Assume that
$$\exists(x,u)\in\rmZ^*(\hf),\st G^*_{\hf}(x,u)\ne0.$$
This means that
$$
\begin{aligned}
&\forall u_1,u_2,u_3,\dots\in\calU,\\
&\exists x_1\in\hf(x,u),x_2\in\hf(x_1,u_1),\dots,\mathrm{and}\ T<\infty,\\
&\st\ h(x_T)>0.
\end{aligned}
$$
Thus, $x_T\notin\projX(\rmZ^*(\hf))$. This is contradictory with the fact that $\rmZ^*(\hf)$ is control invariant. Therefore,
$$\forall(x,u)\in\rmZ^*(\hf),G^*_{\hf}(x,u)=0.$$
Hence, we conclude that $Z_{G^*_{\hf}}=\rmZ^*(\hf)$.
\end{proof}

This theorem tells us that we can find the maximum feasible zone by computing the optimal CDF.
The next theorem further provides a method for computing the optimal CDF, which is also a necessary and sufficient condition for optimal CDF.

\begin{theorem}[Risky Bellman equation]
$G_{\hf}$ is the optimal CDF under $\hf$ if and only if it satisfies the risky Bellman equation:
\begin{equation}
\label{equ: risky Bellman equation}
\begin{aligned}
G_{\hf}(x,u)=c(x)+(1-c(x))\gamma\max_{x'\in\hf(x,u)}\min_{u'\in\calU}G_{\hf}(x',u'),\\
\forall(x,u)\in\calZ,
\end{aligned}
\end{equation}
where $c(x)=[h(x)>0]$, i.e., $c(x)=1$ if $h(x)>0$ else $0$.
\end{theorem}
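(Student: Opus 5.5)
The plan is to prove the two directions separately. First, $G^*_{\hf}$ satisfies \eqref{equ: risky Bellman equation}. Second, the equation has only one solution, so any $G_{\hf}$ satisfying it must coincide with $G^*_{\hf}$. Write $\mathcal{T}$ for the right-hand-side operator,
$$(\mathcal{T}G)(x,u)=c(x)+(1-c(x))\gamma\sup_{x'\in\hf(x,u)}\inf_{u'\in\calU}G(x',u').$$
I use $\sup/\inf$ in case the transition sets or the action space are infinite. I restrict attention to bounded functions $G:\calZ\to\R$. This is harmless, since every CDF takes values in $[0,1]$; I will state explicitly that uniqueness is claimed within this class. I use the convention $\gamma^{\infty}=0$.

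\textbf{Uniqueness (the easy half).} I would show that $\mathcal{T}$ is a $\gamma$-contraction in the sup norm. Where $c(x)=1$, $(\mathcal{T}G_1)(x,u)=(\mathcal{T}G_2)(x,u)=1$. Where $c(x)=0$, the difference is $\gamma$ times a difference of $\sup\inf$ expressions. The map $G\mapsto\sup_{x'}\inf_{u'}G(x',u')$ is nonexpansive, because both $\sup$ and $\inf$ are $1$-Lipschitz in the sup norm. Hence $\|\mathcal{T}G_1-\mathcal{T}G_2\|_\infty\le\gamma\|G_1-G_2\|_\infty$. By Banach's fixed point theorem on the complete space of bounded functions, $\mathcal{T}$ has exactly one fixed point. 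So once $G^*_{\hf}$ is shown to be a fixed point, the ``if'' direction follows as well.

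\textbf{$G^*_{\hf}$ is a fixed point.} This amounts to the dynamic-programming identity for $N^*(x,u)=\inf_\pi N^\pi_{\hf}(x,u)$.

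If $h(x)>0$, then $t=0$ is admissible, so $N^\pi=0$ for every $\pi$ and $G^*=1=c(x)$.

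If $h(x)\le0$, unrolling the definition gives
$$N^\pi(x,u)=1+\inf_{x'\in\hf(x,u)}N^\pi(x',\pi(x')).$$
Here the infimum over successors is the adversarial choice, since $N$ minimizes over trajectories. Consequently
$$G^\pi(x,u)=\gamma\sup_{x'}G^\pi(x',\pi(x')).$$
The inequality $G^*\ge\mathcal{T}G^*$ is immediate: for each $\pi$, $G^\pi(x',\pi(x'))\ge\inf_{u'}G^*(x',u')$; take the supremum over $x'$ and then the infimum over $\pi$.

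\textbf{Main obstacle: the reverse inequality $G^*\le\mathcal{T}G^*$.} This requires exchanging $\inf_\pi$ with $\sup_{x'}$. A single stationary policy must act well simultaneously from every successor $x'$, and from every state that is revisited along different branches.

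I would first exploit that $G$ only takes values in $\{0\}\cup\{\gamma^n\}$, so the infima over $u'$ are attained on this discrete value set. I then define a greedy policy $\pi^\dagger(y)\in\arg\min_{u}G^*(y,u)$ for every state $y$. The claim to prove is $G^{\pi^\dagger}=G^*$ everywhere, i.e.\ $\pi^\dagger$ is uniformly optimal. I would prove this by induction on $n$, showing that
$$\{(y,u) : N^{\pi^\dagger}(y,u)\le n\}=\{(y,u) : N^*(y,u)\le n\},$$
using the one-step identity above at each stage. The case $N^*=\infty$ is exactly the statement that from the maximum feasible zone (the preceding theorem) the greedy policy never leaves it. With uniform optimality in hand, the one-step relation for $\pi^\dagger$ gives $G^*=G^{\pi^\dagger}=\gamma\sup_{x'}\min_{u'}G^*(x',u')$ when $c(x)=0$, which completes the argument.
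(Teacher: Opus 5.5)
Your proof is correct (in the setting of finite or attained minima that the statement's own $\max/\min$ assumes) and has the same skeleton as the paper's: show $G^*_{\hf}$ is a fixed point of the risky Bellman operator, then use the $\gamma$-contraction in the uniform norm and Banach's theorem for uniqueness.

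\textbf{Where you differ from the paper.} The difference is in the fixed-point half. For $c(x)=0$, the paper uses a one-line formal manipulation: it moves $\max_\pi$ inside $\min_{x_1}$ and splits it into $\max_{u_1}\max_\pi$. This tacitly assumes that one stationary policy can be optimal simultaneously from every successor $x'\in\hf(x,u)$, with its action at $x'$ decoupled from the rest of the policy. You correctly identify this exchange as the real content.

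You prove $G^*_{\hf}\ge\mathcal{T}G^*_{\hf}$ directly. For the reverse inequality you show that the greedy policy $\pi^\dagger(y)\in\arg\min_u G^*_{\hf}(y,u)$ is uniformly optimal, by induction on the violation time. The key step is: if $N^{\pi^\dagger}(y,u)=n+1$, then some successor $y'$ has $N^{\pi^\dagger}(y',\pi^\dagger(y'))=n$. By induction and greediness, $\max_{u'}N^*(y',u')\le n$, so $N^*(y,u)\le n+1$.

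\textbf{What each route buys.} The paper's version is shorter. Yours justifies the step the paper skips, and gives the existence of an optimal stationary policy as a by-product. Your induction already covers $N^*=\infty$ by contraposition, so you need not invoke the preceding theorem there. Your explicit restriction to bounded $G$ for uniqueness is in fact necessary on infinite state spaces, which the paper leaves implicit. For example, a safe chain $x_k\to x_{k+1}$ admits the unbounded solution $G(x_k,\cdot)=C\gamma^{-k}$, while $G^*_{\hf}=0$.

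\textbf{One claim to correct.} Discreteness of the value set $\{0\}\cup\{\gamma^n\}$ does not guarantee that $\inf_{u'}G^*_{\hf}(y,u')$ is attained when $\calU$ is infinite. The $\gamma^n$ accumulate at $0$: there may be actions with $N^*(y,u)=n$ for every $n$ but none with $N^*=\infty$. The infimum is attained whenever it is positive or $\calU$ is finite. If you want the full $\sup/\inf$ generality, proceed as follows:
\begin{enumerate}
\item Replace $\pi^\dagger$ by the truncated greedy policies $\pi_K(y)\in\arg\max_u\min\{N^*(y,u),K\}$, which always exist.
\item Run the same induction for $n<K$ to get $N^{\pi_K}\ge\min\{N^*,K\}$.
\item Deduce $G^*_{\hf}(x,u)\le\gamma\max\{\sup_{x'}\inf_{u'}G^*_{\hf}(x',u'),\gamma^K\}$.
\item Let $K\to\infty$.
\end{enumerate}
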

\begin{proof}
First, we prove that $G^*_{\hf}$ satisfies \eqref{equ: risky Bellman equation}. $\forall(x,u)\in\calZ$, if $c(x)=1$, then $G^*_{\hf}=1=c(x)$ and the equality in \eqref{equ: risky Bellman equation} holds. If $c(x)=0$, then
$$
\begin{aligned}
N^*_{\hf}(x,u)=&\max_\pi\min_{t,x_1,x_2,\dots}t\\
=&1+\max_\pi\min_{t,x_1,x_2,\dots}t-1\\
=&1+\min_{x_1}\max_{u_1\in\calU}\max_\pi\min_{t,x_2,x_3,\dots}t-1,\\
\st\ &h(x_t)>0,t\in\mathbb{N},\\
&x_1\in\hf(x,u),x_2\in\hf(x_1,u_1),\\
&x_{i+2}\in\hf(x_{i+1},\pi(x_{i+1})),i\in\mathbb{N}^+,
\end{aligned}
$$
Thus,
$$N^*_{\hf}(x,u)=1+\min_{x'\in\hf(x,u)}\max_{u'\in\calU}N^*_{\hf}(x',u'),$$
which means
$$G^*_{\hf}(x,u)=\gamma\max_{x'\in\hf(x,u)}\min_{u'\in\calU}G^*_{\hf}(x',u'),$$
and the equality in \eqref{equ: risky Bellman equation} holds. Therefore, $G^*_{\hf}$ satisfies \eqref{equ: risky Bellman equation}.

Next, we prove that the risky Bellman operator
$$(BG_{\hf})(x,u)\coloneqq c(x)+(1-c(x))\gamma\max_{x'\in\hf(x,u)}\min_{u'\in\calU}G_{\hf}(x',u')$$
is a contraction mapping under the uniform norm. $\forall G_{\hf},\tilde{G}_{\hf}$, we have
$$
\begin{aligned}
(BG_{\hf})(x,u)-(B\tilde{G}_{\hf})(x,u)=(1-c(x))\gamma\Big(\\
\max_{x'\in\hf(x,u)}\min_{u'\in\calU}G_{\hf}(x',u')-\max_{x'\in\hf(x,u)}\min_{u'\in\calU}\tilde{G}_{\hf}(x',u')\Big).
\end{aligned}
$$
Using the relationship
$$
\begin{aligned}    
&\left|\max_a\min_b f_1(a,b)-\max_a\min_b f_2(a,b)\right|\\
\le&\max_a\max_b\left|f_1(a,b)-f_2(a,b)\right|,
\end{aligned}
$$
we have
$$
\begin{aligned}    
&\left\Vert BG_{\hf}-B\tilde{G}_{\hf}\right\Vert_\infty\\
\le&\gamma\max_{x'\in\hf(x,u)}\max_{u'\in\calU}\left|G_{\hf}(x',u')-\tilde{G}_{\hf}(x',u')\right|\\
=&\gamma\left\Vert G_{\hf}-\tilde{G}_{\hf}\right\Vert_\infty.
\end{aligned}
$$
According to Banach’s fixed-point theorem, the risky Bellman operator $B$ has a unique fixed point. This means that the solution to the risky Bellman equation \eqref{equ: risky Bellman equation} is also unique, and this solution is exactly $G^*_{\hf}$.
\end{proof}

This theorem enables us to compute the optimal CDF through fixed point iteration, which involves repeatedly applying the right-hand side of the risky Bellman equation starting from an arbitrary initial function.
Once the optimal CDF is computed, the maximum feasible zone can be identified as its zero-sublevel set.
This algorithm for finding the maximum feasible zone is called feasible zone iteration, as detailed in Algorithm \ref{alg: feasible zone iteration}.

\begin{algorithm}
\caption{Feasible zone iteration}
\label{alg: feasible zone iteration}
\KwIn{initial constraint decay function $G_0$, uncertain model $\hf$.}
\For{each iteration $k$}{
    \For{each $(x,u)\in\calZ$}{
        $G_{k+1}(x,u)=c(x)+(1-c(x))\gamma\max_{x'\in\hf(x,u)}\min_{u'\in\mathcal{U}}G_k(x',u').$
    }
}
\end{algorithm}

\subsection{Finding least uncertain model}
Starting with a previous uncertain model $\hf_{k-1}$ and a newly-expanded feasible zone $\rmZ_k$, we find the least uncertain model $\hf_k=\hf^*(\rmZ_k,\hf_{k-1})$ by first identifying and extracting removable vertices of the first kind and then do the same thing for the second kind until there is no removable vertices. It is obvious that the resulting uncertain model has containment and imprunability defined in Definition \ref{def: least uncertain model}. Additionally, throughout the process of pruning, a vertex that is previously considered as removable will never become unremovable. Take removability of the second kind as an instance, a vertex that does not belong to any $N$-clique of $D_{\hf}(L)$ at the very beginning will never do afterwards. Hence, we can never mistakenly delete a vertex, so the resulting uncertain model also has maximality, which makes it the $L$-Lipschitz least uncertain model under $\rmZ_k$ and $\hf_{k-1}$.

For removable vertices of the first kind, we can easily extract them by traversing all vertices whose state-action pairs are in $\rmZ$ and leaving only those satisfying $x^\prime=f(x,u)$. This is equivalent to set $\hf(x,u)=\bbr{f(x,u)}$ for all $(x,u)$ in $\rmZ$.
However, to identify removable vertices of the second kind, according to Theorem \ref{thm: removability of the second kind}, it is required to check whether a vertex belongs to any $N$-clique. In other words, we need to solve the following removability decision problem.
\begin{definition}[Removability decision problem]
In a removability (of the second kind) decision problem (shortened as RDP), the input is an uncertain model graph $D_{\hf}(L)$ and a vertex $\xuxp$ of it, and the output is a boolean value that equals true if $\xuxp$ belongs to any $N$-clique, and false otherwise, where $N=|\calZ|$.
\end{definition}

If we can solve the RDP for $D_{\hf}(L)$ and every vertex of it, we can also solve the clique decision problem for $D_{\hf}(L)$ and $N$, which asks whether $D_{\hf}(L)$ has a $N$-clique. The clique decision problem is one of Karp's 21 NP-complete problems (also NP-hard)~\cite{karp2010reducibility}. To the best of our knowledge, even though an uncertain model graph is by definition an $N$-partite graph (a graph whose vertices set can be partitioned into $N$ different independent sets), which facilitates the searching for $N$-cliques, there is still no polynomial-time algorithm for finding $N$-cliques in $N$-partite graph so far.


Since there is no efficient way to discard all removable vertex of the second kind through the equivalent condition in Theorem \ref{thm: removability of the second kind}, we resort to a sufficient condition given by the following theorem.

\begin{theorem}
\label{thm: removability of the second kind sufficient cond}
Given a constant $L$ and a well-calibrated uncertain model $\hf$, a vertex $\xuxp$ of $D_{\hf}(L)$ is removable of the second kind under $L$ if $\xuxp$ does not have neighbors of all colors different from its own.
\end{theorem}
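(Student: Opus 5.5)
The plan is to reduce the statement to Theorem \ref{thm: removability of the second kind}. First I will fix what a ``color'' means. I interpret the color of a vertex $\xuxp$ of $D_{\hf}(L)$ as its state-action pair $(x,u)$. The vertex set is then partitioned into $N=|\calZ|$ color classes $\bbr{(x,u,x')\,|\,x'\in\hf(x,u)}$, each an independent set by Definition \ref{def: uncertain model graph}; this is the $N$-partite structure noted above. The hypothesis then says there is a color $(\tilde{x},\tilde{u})\neq(x,u)$ such that $\xuxp$ is adjacent to no vertex whose state-action pair is $(\tilde{x},\tilde{u})$.

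I will argue by contraposition, i.e.\ show that if $\xuxp$ is not removable of the second kind then it has neighbors of every other color. By Theorem \ref{thm: removability of the second kind}, non-removability means that $\xuxp$ lies in some $N$-clique $C$ of $D_{\hf}(L)$. As in the necessity part of that theorem's proof, vertices of the same color are never adjacent, so the $N$ vertices of $C$ have pairwise distinct colors. Since there are exactly $N$ colors, $C$ contains exactly one vertex of each color. Every vertex of $C$ other than $\xuxp$ is adjacent to it, so for every color $(\tilde{x},\tilde{u})\neq(x,u)$ the unique vertex of $C$ with that color is a neighbor of $\xuxp$. Hence $\xuxp$ has neighbors of all colors different from its own. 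This contradicts the hypothesis, so $\xuxp$ belongs to no $N$-clique and is removable of the second kind by Theorem \ref{thm: removability of the second kind}.

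As an alternative that avoids the clique characterization, one could argue directly from Definition \ref{def: Removability of the second kind}. Suppose $f_0$ is $L$-Lipschitz, satisfies $f_0(x,u)=x'$, and $\hf$ is well-calibrated under $f_0$. Then $(\tilde{x},\tilde{u},f_0(\tilde{x},\tilde{u}))$ is a vertex of color $(\tilde{x},\tilde{u})$, because $f_0(\tilde{x},\tilde{u})\in\hf(\tilde{x},\tilde{u})$. It is adjacent to $\xuxp$ because $d(x',f_0(\tilde{x},\tilde{u}))\le L\,d((x,u),(\tilde{x},\tilde{u}))$. This contradicts the assumption that there is no neighbor of that color.

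I expect no real obstacle. The only delicate point is the pigeonhole step, which uses the fact that $N=|\calZ|$ equals the number of colors; this implicitly requires $\calZ$ to be finite, as in Theorem \ref{thm: removability of the second kind}. The direct argument via $f_0$ sidesteps this dependence, so I would present that one as the main proof and mention the clique argument as a remark.
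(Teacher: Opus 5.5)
Your proposal is correct, and your clique argument is exactly the paper's proof. The paper argues that same-color vertices are never adjacent, so an $N$-clique contains exactly one vertex of each of the $N$ colors. A vertex missing some color among its neighbors therefore lies in no $N$-clique, and Theorem~\ref{thm: removability of the second kind} turns this into removability.

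The direct argument you want to present as the main proof takes a genuinely different route. You reason from a witness $f_0$ of Definition~\ref{def: Removability of the second kind} rather than about arbitrary $N$-cliques. Its graph gives, for every other color $(\tilde{x},\tilde{u})$, an explicit vertex $(\tilde{x},\tilde{u},f_0(\tilde{x},\tilde{u}))$. That vertex exists by well-calibration of $\hf$ under $f_0$, and it is adjacent to $\xuxp$ by $L$-Lipschitz continuity. In effect, you reuse the construction from the sufficiency half of the proof of Theorem~\ref{thm: removability of the second kind} and skip the clique detour.

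This buys two things. The argument is self-contained, and it avoids the counting step ``$N$ pairwise distinct colors among $N$ colors means every color,'' which you correctly note is valid only for finite $\calZ$. Your version therefore holds verbatim for infinite, e.g.\ continuous, state-action spaces.

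The paper's route keeps the result inside its graph-theoretic narrative. It presents the neighbor-color condition as a local necessary condition for membership in an $N$-clique. That framing is what motivates the pruning rule of Algorithm~\ref{alg: uncertain model iteration} as a polynomial-time relaxation of the NP-hard removability decision problem.
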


\begin{proof}
Let $\xuxp$ be a vertex that does not have neighbors of all colors different from its own. Note from the definition of uncertain model graph that any two vertices of the same color must not be adjacent. Since there are $N$ colors in total, it follows that any $N$-clique must contain exactly one vertex of each color. Thus, $\xuxp$ does not belong to any $N$-clique and hence is removable of the second kind.
\end{proof}

Guided by Theorem \ref{thm: removability of the second kind sufficient cond}, we propose an approximate algorithm for solving the least uncertain model called uncertain model iteration, as detailed in Algorithm \ref{alg: uncertain model iteration}.
The majority of uncertain model iteration is four nested loops. Supposing that the size of every transition set is bounded by $M$, the while-loop will repeat at most $NM$ times, and the three for-loops will repeat at most $N$, $M$, $N$ times, from outside to inside respectively. Additionally, the calculation of the distance from a point to a transition set in the innermost loop has a time complexity of $O(M)$. Thus, the time complexity of Algorithm \ref{alg: uncertain model iteration} is $O(N^3M^3)$ in total. Using this approximate algorithm for the step of finding the least uncertain model in Algorithm \ref{alg: SEE}, we obtain a practical algorithm for safe exploration. This algorithm preserves the monotonic refinement of the uncertain model and the monotonic expansion of the feasible zone because it is still removing vertices from the model graph, leading to potentially more feasible state-action pairs.

\begin{algorithm}
\caption{Uncertain model iteration}
\label{alg: uncertain model iteration}
\KwIn{initial uncertain model $\hf_0$, feasible zone $\rmZ$, constant $L$.}
$\hf\leftarrow\hf_0$. \\
\For{each $(x,u)\in\rmZ$}{
    $\hf(x,u)=\{f(x,u)\}$.
}
\While{true}{
    $T_\mathrm{rm}=\emptyset$\;
    \For{each $(x_1,u_1)\in\calZ\setminus\rmZ$}{
        \For{each $x_1'\in\hf(x_1,u_1)$}{
            \For{each $(x_2,u_2)\in\calZ$}{
                $d=d(x_1',\hf(x_2,u_2))$\;
                \If{$d>Ld((x_1,u_1),(x_2,u_2))$}{
                    $T_\mathrm{rm}=T_\mathrm{rm}\cup\{(x_1,u_1,x_1')\}$\;
                    \textbf{break}
                }
            }
        }
    }
    \If{$T_\mathrm{rm}=\emptyset$}{
        \textbf{break}
    }
    \For{each $(x,u,x')\in T_\mathrm{rm}$}{
        $\hf(x,u)=\hf(x,u)\setminus\{x'\}$.
    }
}
\end{algorithm}

\section{Experiments}
In this section, we test our safe exploration algorithm on \change{three} classic control tasks: \change{a 2D linear double integrator, a 2D nonlinear pendulum, and a 3D nonlinear unicycle}. For each task, we visualize the feasible zones and uncertain models during exploration. We also compare the final feasible zones with the theoretical maximum feasible zones under the true models \change{for two 2D tasks}.

\subsection{Double integrator}
Consider a double integrator with the following dynamics:
\begin{equation}
\begin{aligned}
    x_{t+1}(1)&=x_t(1)+x_t(2), \\
    x_{t+1}(2)&=x_t(2)+u_t,
\end{aligned}
\end{equation}
where $x(i)$ denotes the $i$-th element of $x$.
The state space and action space are discrete and only take integers:
\begin{equation}
\label{eq: double integrator state and action spaces}
\begin{aligned}
    \calX&=\mathbb{Z}^2\cap[-20,20]\times[-15,15], \\
    \calU&=\mathbb{Z}\cap[-2,2].
\end{aligned}
\end{equation}
The constrained set equals the state space:
\begin{equation}
    \Xcstr=\calX.
\end{equation}
This setting may be a bit confusing since all states in the state space are constraint-satisfying. Strictly speaking, the state space is larger than $\calX$ in \eqref{eq: double integrator state and action spaces} and can be considered as the whole $\mathbb{Z}^2$. However, we can immediately terminate the task once the state goes out of $\calX$ so that the existence of states outside $\calX$ is only for indication of constraint violation. Since our primary focus is on exploring the feasible zone and these states can't be feasible, we exclude them from the state space for simplicity.

The initial uncertain model is chosen so that every transition set is a ball centered at the true transition state:
\begin{equation}
    \hf_0(x,u)=B_2(f(x,u),r_0)\cap\calX,
\end{equation}
where $f$ is the true model, $B_2$ means an $L_2$ ball, and $r_0$ is the radius.
A requirement for the initial uncertain model is that its corresponding maximum feasible zone must be non-empty, i.e., $\rmZ^*(\hf_0)\neq\emptyset$. Otherwise, Algorithm \ref{alg: SEE} will terminate in the first iteration because an empty feasible zone cannot provide any information for updating the uncertain model. To fulfill this requirement, we decrease the uncertainty of $\hf_0$ by setting it to be the true model in a small set near the equilibrium $(x,u)=(0,0)$, i.e.,
\begin{equation}
\begin{aligned}
    \hf_0(x,u)&=\{f(x,u)\}, \\
    \forall x\in B_2(0,r_x)\cap\calX, &\ u\in B_2(0,r_u)\cap\calX,
\end{aligned}
\end{equation}
where $r_x$ and $r_u$ are radii of state and action sets. Different values of $r_0$, $r_x$, and $r_u$ are tested to see their impacts on safe exploration.
\change{It is important to note that this zero-uncertainty region assumption is not a theoretical necessity. Our framework only requires that the initial feasible zone is non-empty so that exploration can commence. In practice, if the initial model is learned from offline data, a non-empty feasible zone can typically be recovered without artificially setting a zero-uncertainty region.}

The distance metric of Lipschitz continuity is chosen as the $L_2$ norm. For a linear dynamic model, the Lipschitz constant $L_f$ can be analytically computed. In this task, we have $L_f=1.73$. However, in most practical control problems, the Lipschitz constant cannot be accurately computed. Instead, we usually choose some large value as its upper bound. Therefore, we also test different values of $L\ge L_f$ to see its impact on safe exploration.

To quantify and compare the model uncertainty during safe exploration, we define a metric called uncertainty degree:
\begin{equation}
    U_{\hf}(x,u)=\left|\hf(x,u)\right|-1,
\end{equation}
which represents the number of ``redundant" states, i.e., states other than the true transition state, in the transition set. The smaller the uncertainty degree, the more accurate the uncertain model is. The true model, which is the most accurate, has an uncertainty degree of 0 at all state-action pairs.

\begin{figure*}[b]
    \centering
    \includegraphics[width=0.9\linewidth]{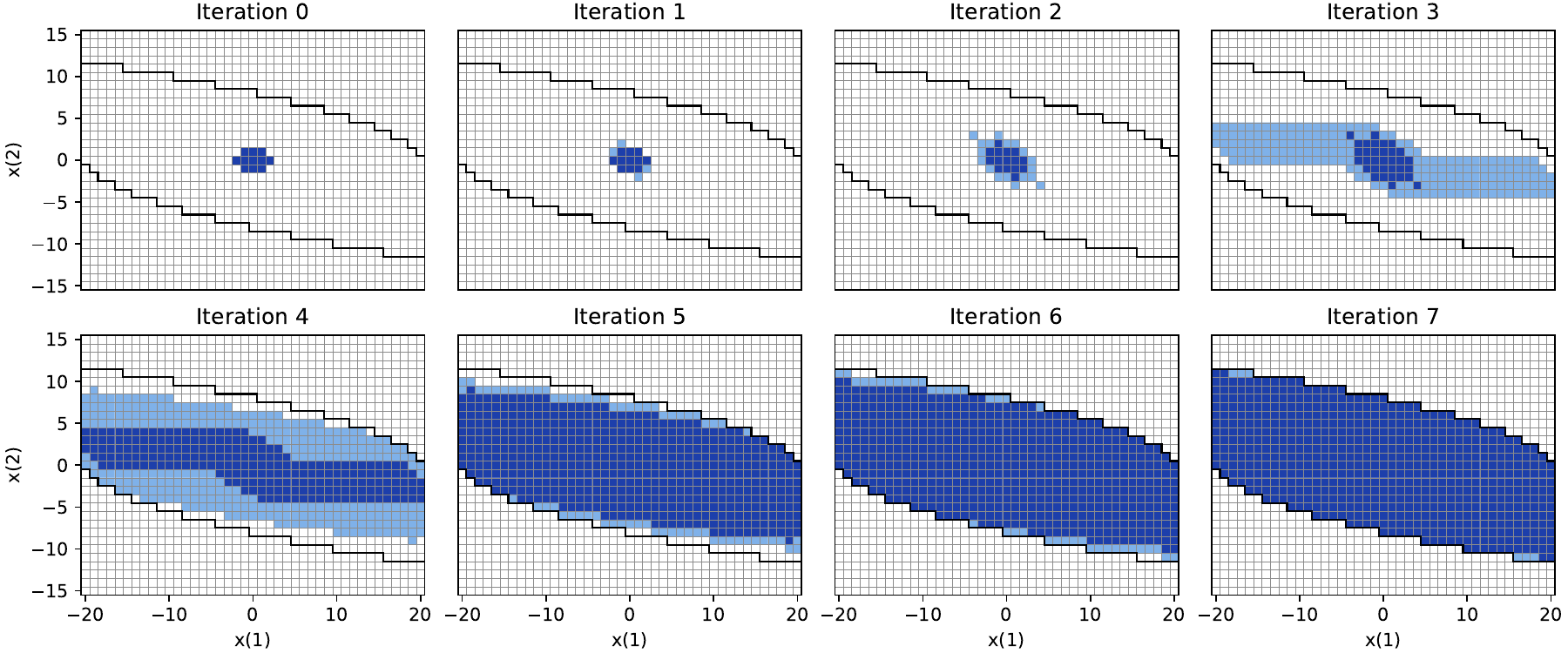}
    \caption{Feasible regions at different iterations on double integrator. The black line is the boundary of the maximum feasible region under the true model. The dark blue region stands for the feasible region in the previous iteration. The light blue region stands for the expanded part of the feasible region in the current iteration.}
    \label{fig: double integrator region}
\end{figure*}

We set the parameters as in Table \ref{tab: double integrator default parameters} and visualize the safe exploration process in Figure \ref{fig: double integrator region} and \ref{fig: double integrator model}.
Figure \ref{fig: double integrator region} shows the feasible region at different iterations, which is the projection of feasible zone on the state space.
\change{The maximum feasible region under the true model, shown by the black thick lines, are computed by the approach from \cite{yang2024synthesizing}. This approach, called feasible region iteration, is a special case of Algorithm \ref{alg: feasible zone iteration}, with the uncertain model replaced by the true model. The update rule is correspondingly simplified to
\begin{equation}
    G_{k+1}(x,u)=c(x)+(1-c(x))\gamma\min_{u'\in\mathcal{U}}G_k(x',u'),
\end{equation}
where $x'=f(x,u)$. The feasible zone computed by this method is the maximum one any algorithm can obtain in theory. We will use it to compute the feasible zone recall of our algorithm.}
The feasible region monotonically expands until the equilibrium of safe exploration is reached after 8 iterations. In this task, the final feasible region equals the maximum feasible region under the true model. However, this is not guaranteed by our safe exploration algorithm. In fact, as shown in the following experiments, this may not be the case in a different task or even in the same task with different parameters.
Figure \ref{fig: double integrator model} shows the model uncertainty degree on each state at different iterations, which is the sum of uncertainty degrees over actions. The uncertainty degree monotonically decreases until the equilibrium of safe exploration is reached. The uncertainty degree inside the feasible region is smaller than that outside the region. Note that the uncertainty degree does not decrease to zero even at convergence, which indicates that the least uncertain model at the equilibrium does not necessarily equal the true model.

\begin{table}[t]
    \centering
    \caption{Default parameter values for double integrator.}
    \label{tab: double integrator default parameters}
    \begin{tabular}{lcc}
        \toprule
        Description & Parameter & Value \\
        \midrule
        Initial transition set radius & $r_0$ & 2.0  \\
        True model state set radius   & $r_x$ & 2.0  \\
        True model action set radius  & $r_u$ & 1.0  \\
        Lipschitz constant            & $L$   & 1.73 \\
        \bottomrule
    \end{tabular}
\end{table}

\begin{figure*}[t]
    \centering
    \includegraphics[width=0.95\linewidth]{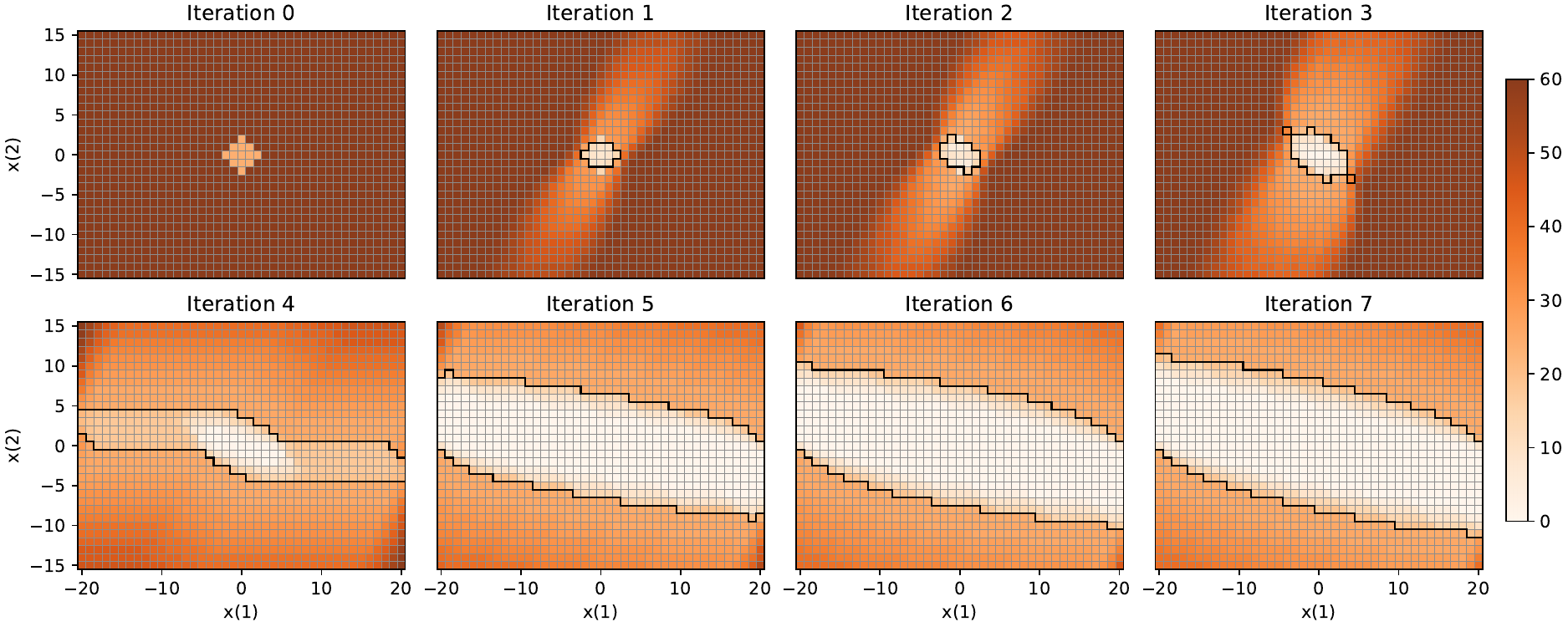}
    \caption{Model uncertainty degree at different iterations on double integrator. The black line is the boundary of the feasible region in the previous iteration. The colors of the grids stand for values of uncertainty degree.}
    \label{fig: double integrator model}
\end{figure*}

\begin{table*}[t]
    \centering
    \caption{Impact of different parameters on feasible region and model uncertainty of double integrator.}
    \label{tab: double integrator statistics}
    \begin{threeparttable}
    \begin{tabular}{cccccccc}
        \toprule
        \multicolumn{4}{c}{Parameter\tnote{1}} & \multicolumn{4}{c}{Result} \\
        \midrule
        $r_0$ & $r_x$ & $r_u$ & $L$ & \# of iterations & Feasible zone recall\tnote{2} \ (\%) & Average UD inside\tnote{3} & Average UD outside\tnote{4} \\
        \midrule
        1.0  & - & - & - & 3  & 100.00 & 0.0 & 2.1 \\
        2.0  & - & - & - & 8  & 100.00 & 0.0 & 5.6 \\
        5.0  & - & - & - & 11 & 100.00 & 0.0 & 29.2 \\
        10.0 & - & - & - & 11 & 100.00 & 0.0 & 33.6 \\
        \midrule
        - & 0.0 & 0.0 & - & 1 & 0.04   & 11.7 & 11.7 \\
        - & 1.0 & 1.0 & - & 2 & 0.44   & 11.2 & 11.3 \\
        - & 2.0 & 1.0 & - & 8 & 100.00 & 0.0  & 5.6 \\
        - & 5.0 & 2.0 & - & 5 & 100.00 & 0.0  & 5.6 \\
        \midrule
        - & - & - & 1.73 & 8  & 100.00 & 0.0  & 5.6 \\
        - & - & - & 1.90 & 10 & 100.00 & 0.0  & 8.4 \\
        - & - & - & 2.00 & 2  & 1.08   & 11.4 & 12.0 \\
        - & - & - & 2.50 & 1  & 0.84   & 11.8 & 12.0 \\
        \bottomrule
    \end{tabular}
    \begin{tablenotes}
        \item[1] Empty parameters take default values in Table \ref{tab: double integrator default parameters}.
        \item[2] Size of explored feasible zone divided by size of the maximum feasible zone under the true model.
        \item[3] Average uncertainty degree (UD) inside the maximum feasible zone.
        \item[4] Average uncertainty degree (UD) outside the maximum feasible zone.
    \end{tablenotes}
    \end{threeparttable}
\end{table*}

\begin{figure}[t]
    \centering
    \includegraphics[width=0.9\linewidth, trim=60 60 0 80, clip]{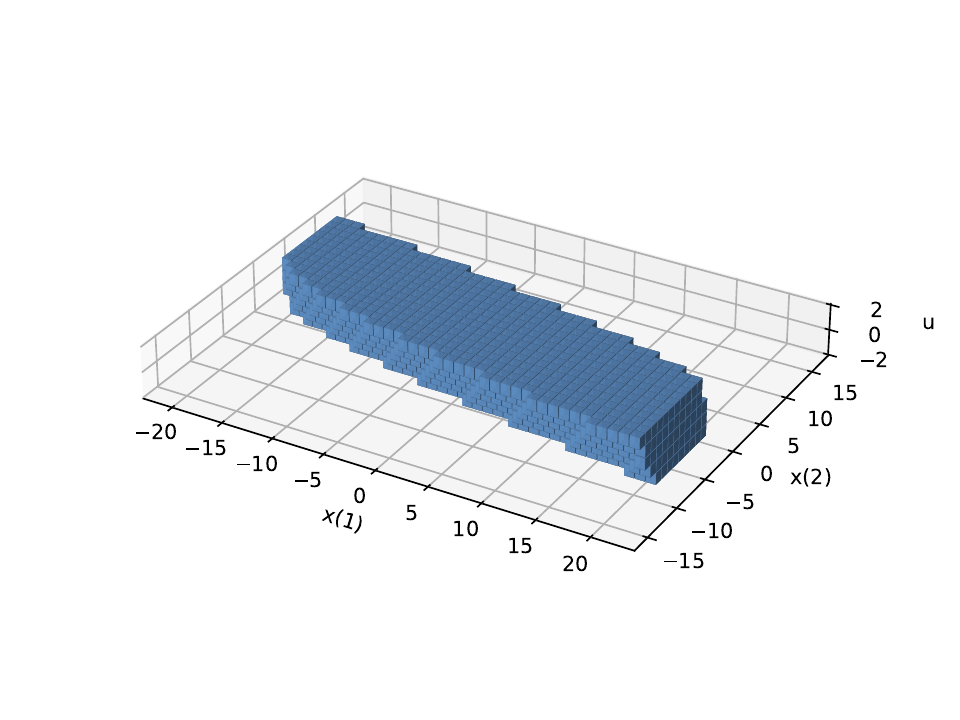}
    \caption{Feasible zone at convergence on double integrator.}
    \label{fig: double integrator zone}
\end{figure}

\begin{figure}[t]
    \centering
    \includegraphics[width=0.35\textwidth, trim=10 15 10 15, clip]{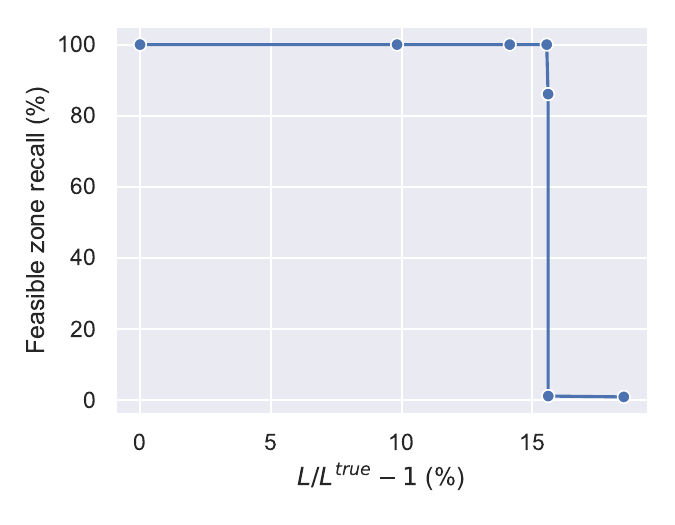}
    \caption{Feasible zone recall versus overall Lipschitz constant $L$.}
    \label{fig: double intergrator FZR vs L}
\end{figure}

We also visualize the feasible zone at convergence in Figure \ref{fig: double integrator zone}. Compared with Figure \ref{fig: double integrator region}, there is an additional $u$ axis for action. Inside the feasible zone, each state corresponds to a set of feasible actions. In the middle area of the feasible zone, all actions are feasible, while in the edge area, only some specific actions are feasible. For example, at state $x=[20,-15]^\top$, the only feasible action is $u=2$. We can intuitively understand through Figure \ref{fig: double integrator zone} that the projection of the feasible zone on the state space (the $x(1)$-$x(2)$ plane in the figure) is the feasible region shown in Figure \ref{fig: double integrator region}.

Then, we change the values of the parameters listed in Table \ref{tab: double integrator default parameters} to see their impacts on feasible region and model uncertainty.
First, we change the value of $r_0$, and the results are shown in the first part of Table \ref{tab: double integrator statistics}. It shows that the value of $r_0$ has little effect on the feasible region and the model uncertainty at convergence. However, as $r_0$ increases, the number of iterations required to reach convergence grows.
Next, we change the values of $r_x$ and $r_u$, and the results are shown in the second part of Table \ref{tab: double integrator statistics}. When $r_x$ and $r_u$ are too small, safe exploration terminates at an early stage with a small feasible region and high model uncertainty. As $r_x$ and $r_u$ increase, the feasible region quickly converges to the maximum one along with low model uncertainty. If $r_x$ and $r_u$ are further increased, the number of iterations required to reach convergence will decrease, indicating that the exploration becomes easier.
Finally, we change the value of $L$, and the results are shown in the third part of Table \ref{tab: double integrator statistics} and in Figure \ref{fig: double intergrator FZR vs L}. When $L$ increases, the number of iterations first increases as well because the update of uncertain model becomes less efficient. As $L$ further increases, the number of iterations decreases, feasible region shrinks, and model uncertainty increases. This indicates that an overly large Lipschitz constant will cause the exploration to terminate at an early stage. In practice, when we do not know the true Lipschitz constant for sure, we shall start from an overestimation and gradually lower the estimated Lipschitz constant, until a reasonable solution can be found. But the estimated value must not be lower than the true one.
\change{Data collected from the real system can help to estimate the true Lipschitz constant. We will discuss this in \ref{sec:Discussion}.}

\begin{figure*}[b]
    \centering
    \includegraphics[width=0.8\linewidth]{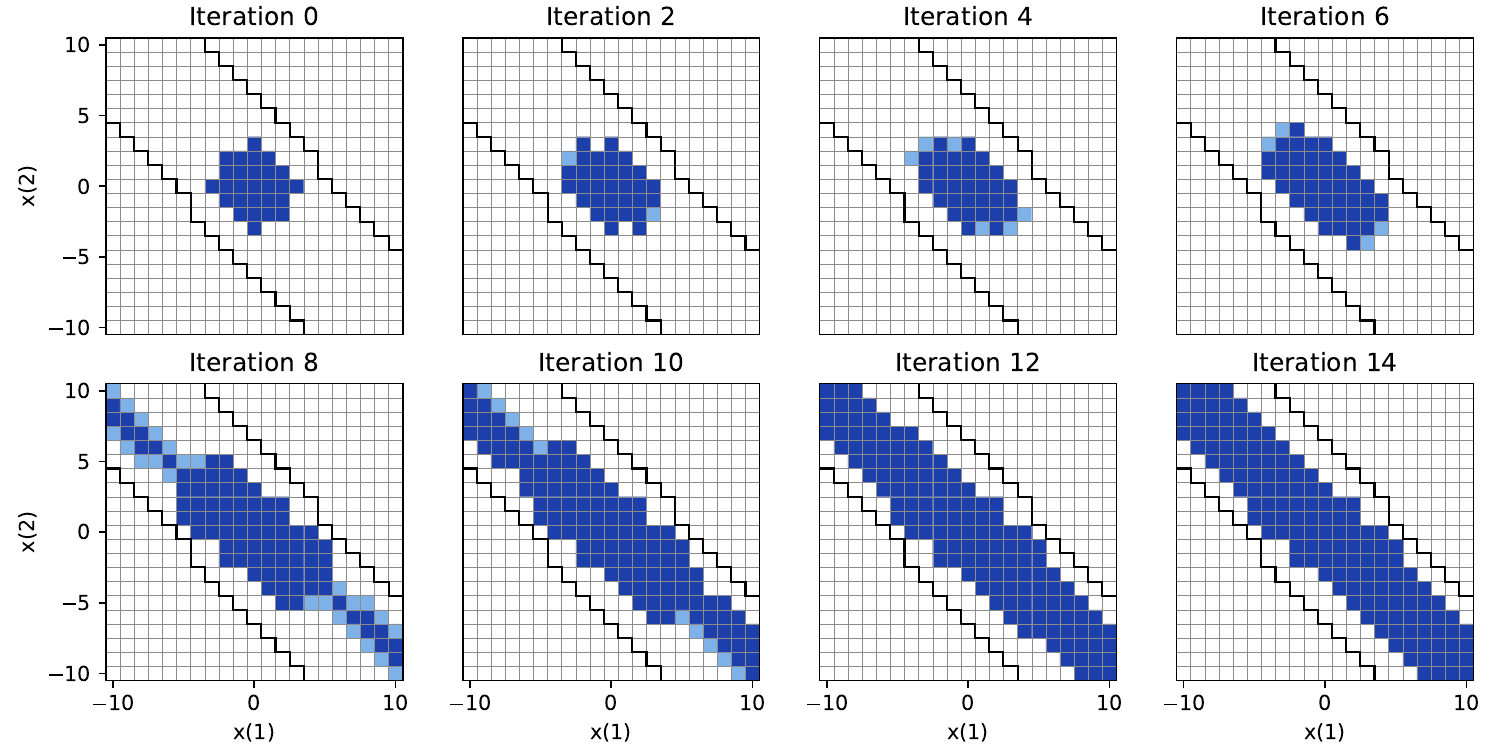}
    \caption{Feasible regions at different iterations on pendulum.}
    \label{fig: pendulum region}
\end{figure*}

\subsection{Pendulum}
Consider a pendulum with the following dynamics:
\begin{equation}
    \dot{\theta}_{t+1}=\dot{\theta}_t+\left(-\frac{3g}{2l}\sin\theta_t+\frac{3}{ml^2}u_t\right)\Delta t,
\end{equation}
where $\theta$ is the angle, $x=[\theta,\dot{\theta}]^\top$ is the state, and the action $u$ is the torque exerted on the pendulum.
\change{The mass of the pendulum $m=1\ \mathrm{kg}$, the length $l=1\ \mathrm{m}$, the gravitational acceleration $g=9.8\ \mathrm{m/s^2}$, and the time step $\Delta t=0.3\ \mathrm{s}$.}
We discretize the dynamics spatially by projecting the state to a grid on a rectangle area:
\change{
\begin{equation}
\begin{aligned}
    \theta&=i\cdot0.05,\ i\in\mathbb{Z}\cap[-10,10], \\
    \dot\theta&=j\cdot0.2,\ j\in\mathbb{Z}\cap[-10,10]. \\
\end{aligned}
\end{equation}
}%
With this discretization, each state corresponds to a unique integer vector $[i,j]$. For simplicity of notation, we use this integer vector to represent the state. The value of action is also restricted to integers.
The state space and action space are:
\begin{equation}
\begin{aligned}
    \calX&=\mathbb{Z}^2\cap[-10,10]\times[-10,10], \\
    \calU&=\mathbb{Z}\cap[-3,3].
\end{aligned}
\end{equation}
The constrained set equals the state space, i.e., $\Xcstr=\calX$.


We set the parameters as in Table \ref{tab: pendulum default parameters} and visualize the safe exploration process in Figure \ref{fig: pendulum region} and \ref{fig: pendulum model}.
The feasible region monotonically expands, and the uncertainty degree monotonically decreases until the equilibrium of safe exploration is reached after 14 iterations.
In this task, the final feasible region is smaller than the maximum feasible region under the true model.
There are two possible reasons for this phenomenon: 1) the explorable maximum feasible region is smaller than the maximum feasible region, and 2) the approximation of the least uncertain model makes the final feasible region smaller than the explorable maximum feasible region.

\begin{table}[t]
    \centering
    \caption{Default parameter values for pendulum.}
    \label{tab: pendulum default parameters}
    \begin{tabular}{lll}
        \toprule
        Description & Parameter & Value \\
        \midrule
        Initial transition set radius & $r_0$ & 3.0  \\
        True model state set radius   & $r_x$ & 3.0  \\
        True model action set radius  & $r_u$ & 2.0  \\
        Lipschitz constant            & $L$   & 3.00 \\
        Lipschitz constant of state   & $L_x$ & 3.00 \\
        Lipschitz constant of action  & $L_u$ & 2.00 \\
        \bottomrule
    \end{tabular}
\end{table}

\begin{figure*}[t]
    \centering
    \includegraphics[width=0.85\linewidth]{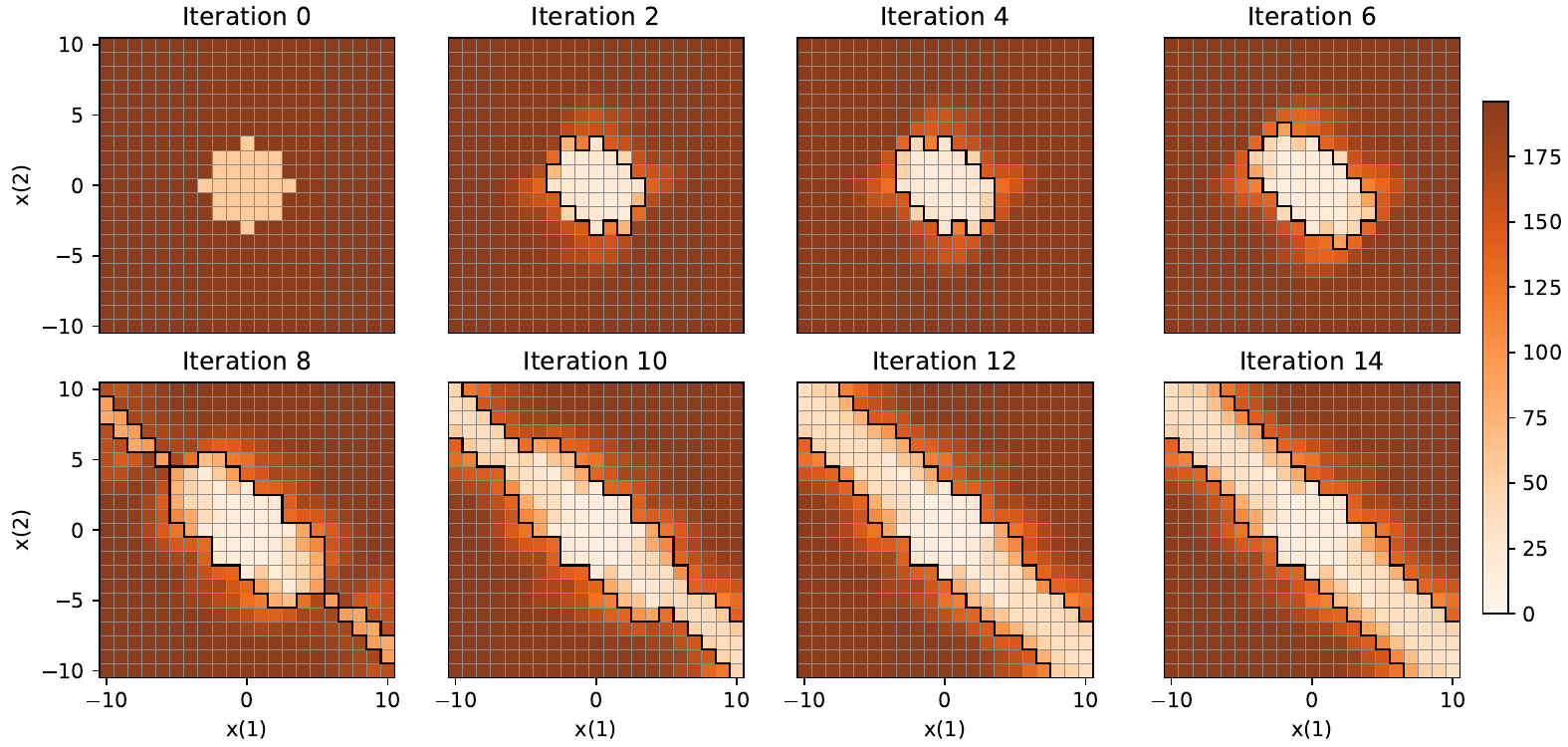}
    \caption{Model uncertainty degree at different iterations on pendulum.}
    \label{fig: pendulum model}
\end{figure*}

\begin{table*}[t]
    \centering
    \caption{Impact of different parameters on feasible region and model uncertainty of pendulum.}
    \label{tab: pendulum statistics}
    \begin{threeparttable}
    \begin{tabular}{cccccccccc}
        \toprule
        \multicolumn{6}{c}{Parameter\tnote{1}} & \multicolumn{4}{c}{Result} \\
        \midrule
        $r_0$ & $r_x$ & $r_u$ & $L$ & $L_x$ & $L_u$ & \# of iterations & Feasible zone recall \ (\%) & Average UD inside & Average UD outside \\
        \midrule
        2.0 & - & - & - & - & - & 3  & 52.50 & 3.4  & 11.2 \\
        3.0 & - & - & - & - & - & 14 & 52.05 & 6.4  & 24.9 \\
        4.0 & - & - & - & - & - & 6  & 10.10 & 37.2 & 45.3 \\
        5.0 & - & - & - & - & - & 6  & 10.10 & 59.3 & 74.0 \\
        \midrule
        - & 2.0 & 1.0 & - & - & - & 1  & 2.55  & 25.7 & 27.6 \\
        - & 3.0 & 1.0 & - & - & - & 3  & 4.55  & 24.1 & 27.3 \\
        - & 3.0 & 2.0 & - & - & - & 14 & 52.05 & 6.4  & 24.9 \\
        - & 4.0 & 2.0 & - & - & - & 7  & 62.71 & 4.70 & 24.0 \\
        \midrule
        - & - & - & 3.00 & 3.00 & 2.00 & 14 & 52.05 & 6.4  & 24.9 \\
        - & - & - & 3.00 & 4.00 & 2.00 & 14 & 52.05 & 6.4  & 24.9 \\
        - & - & - & 3.00 & 3.00 & 3.00 & 5  & 9.88  & 22.6 & 26.8 \\
        - & - & - & 4.00 & 3.00 & 2.00 & 1  & 8.55  & 23.1 & 27.4 \\
        \bottomrule
    \end{tabular}
    \begin{tablenotes}
        \item[1] Empty parameters take default values in Table \ref{tab: pendulum default parameters}.
    \end{tablenotes}
    \end{threeparttable}
\end{table*}

We also visualize the feasible zone at convergence in Figure \ref{fig: pendulum zone}. It shows clearly that the explored feasible zone is a subset of the maximum feasible zone under the true model. In some states inside the latter zone, some or all of the corresponding feasible actions are explored, making these states part of the feasible region in Figure \ref{fig: pendulum region}. In other states, none of the corresponding feasible actions are explored, leaving these states infeasible.

\begin{figure}
    \centering
    \includegraphics[width=0.9\linewidth, trim=60 40 0 60, clip]{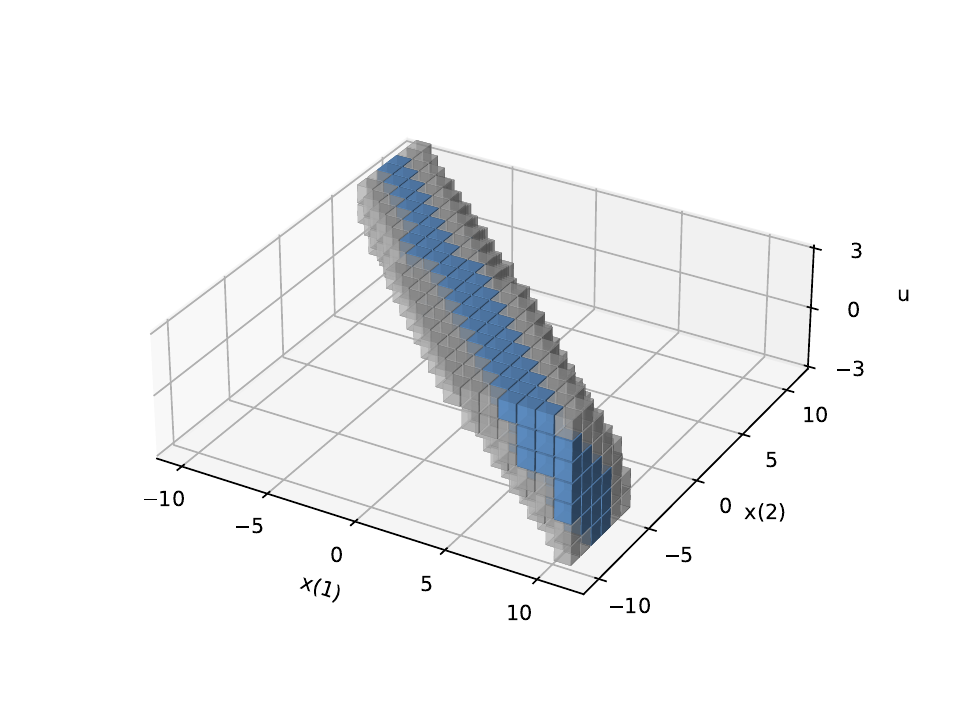}
    \caption{Feasible zone at convergence on pendulum. The blue grids stand for the explored feasible zone. The gray grids stand for the unexplored part of the maximum feasible zone under the true model.}
    \label{fig: pendulum zone}
\end{figure}

We change the values of the parameters listed in Table \ref{tab: pendulum default parameters} to see their impacts on feasible zone and model uncertainty.
First, we change the value of $r_0$, and the results are shown in the first part of Table \ref{tab: pendulum statistics}. As $r_0$ increases, feasible zone shrinks, and model uncertainty increases. This is because a larger $r_0$ means that the initial model is more uncertain, which makes exploration harder.
Next, we change the values of $r_x$ and $r_u$, and the results are shown in the second part of Table \ref{tab: pendulum statistics}. As $r_x$ and $r_u$ increase, the number of iterations first increases and then decreases, the feasible zone expands, and model uncertainty decreases. The reason for this phenomenon is similar to that in double integrator.
Finally, we change the value of $L$, $L_x$, and $L_u$, and the results are shown in the third part of Table \ref{tab: pendulum statistics} and Figure \ref{fig: pendulum FZR vs L}. The change of $L_x$ does not have much effect on feasible region and model uncertainty, while the change of $L$ or $L_u$ has a significant impact.
When $L$ or $L_u$ increases, feasible zone quickly shrinks, and model uncertainty quickly increases. This is because the exploration cannot proceed with too large Lipschitz constants and terminates at an early stage.

\begin{figure*}
    \centering
    \subfloat[]{\includegraphics[width=0.33\textwidth, trim=10 15 10 15, clip]{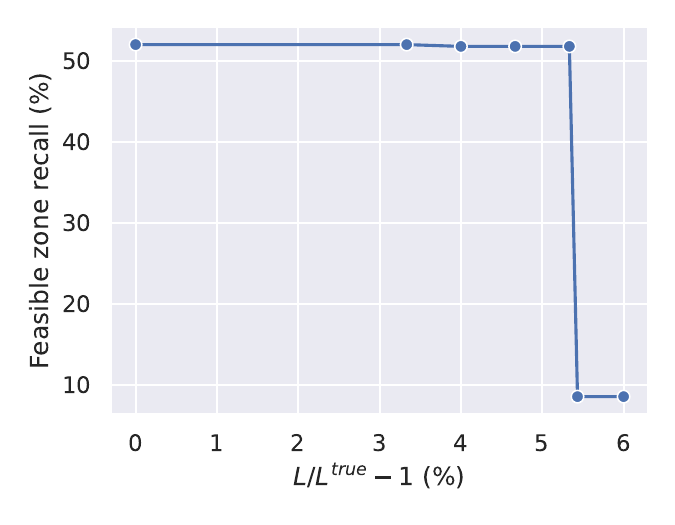}}
    \subfloat[]{\includegraphics[width=0.33\textwidth, trim=10 15 10 15, clip]{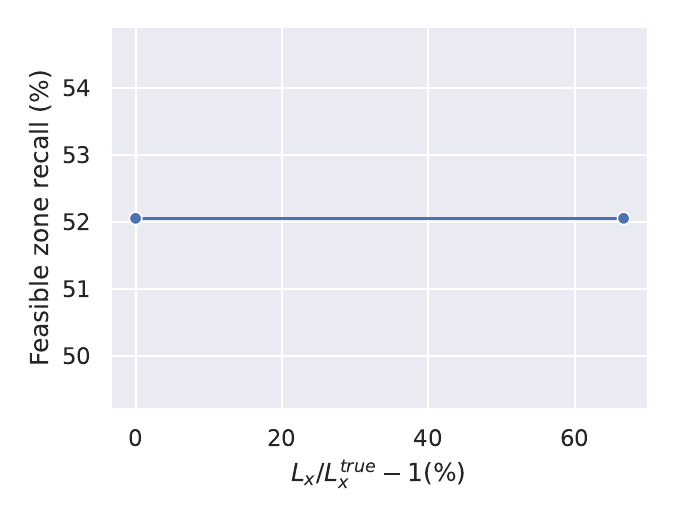}} 
    \subfloat[]{\includegraphics[width=0.33\textwidth, trim=10 15 10 15, clip]{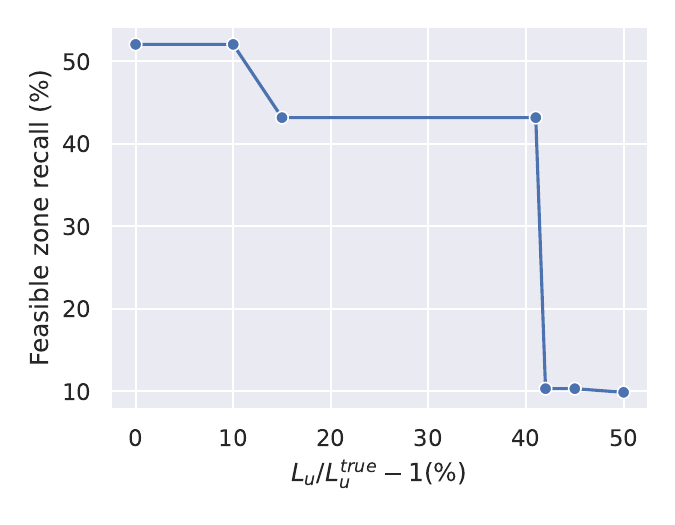}}
    \caption{Feasible zone recall versus (a) overall Lipschitz constant $L$, (b) Lipschitz constant of state $L_x$, (c) Lipschitz constant of action $L_u$.}
    \label{fig: pendulum FZR vs L}
\end{figure*}

\begin{figure*}[b]
    \centering
    \includegraphics[width=0.9\linewidth]{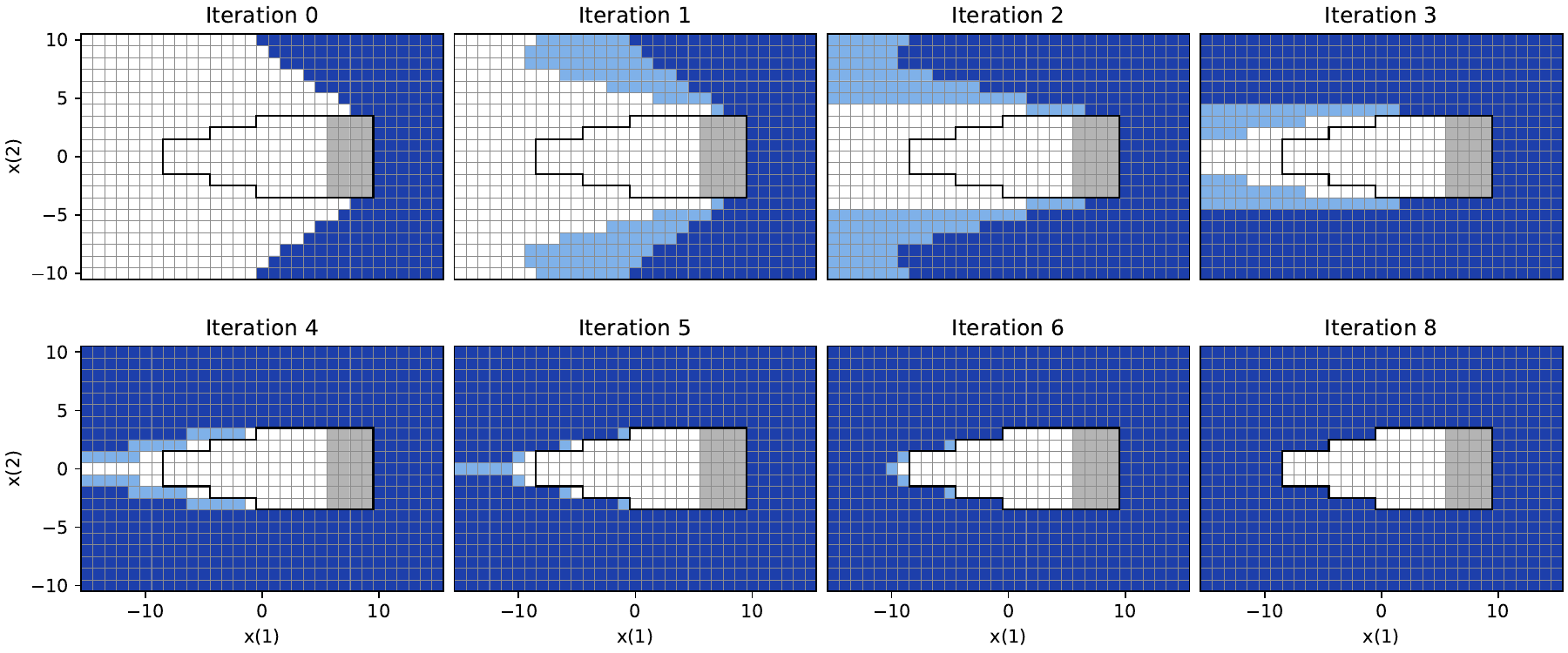}
    \caption{Feasible regions at different iterations on unicycle. The horizontal and vertical axes represents the $y$ and $z$ coordinates, respectively. The heading angle is fixed at $\theta=0$ (heading right). The gray rectangle represents the obstacle.}
    \label{fig: unicycle region}
\end{figure*}

\change{
\subsection{Unicycle}
To test our SEE algorithm beyond 2D tasks, we consider a 3D unicycle with the following dynamics:
\begin{equation}
\begin{bmatrix}
y_{t+1} \\
z_{t+1} \\
\theta_{t+1}
\end{bmatrix}
=
\begin{bmatrix}
y_t \\
z_t \\
\theta_t
\end{bmatrix}
+
\Delta t
\begin{bmatrix}
v \cos\theta_t \\
v \sin\theta_t \\
0
\end{bmatrix}
+
\Delta t
\begin{bmatrix}
0 \\
0 \\
u_t
\end{bmatrix},
\end{equation}
where $(y,z)$ is the position of the unicycle, $\theta$ is the heading angle, $v=1\ \mathrm{m/s}$ is the constant velocity, and $\Delta t=0.2\ \mathrm{s}$. The state is $x=[y,z,\theta]^\top$, and the action $u$ is the angular velocity. We discretize the state and action spaces as:
\begin{align*}
    y&=i\cdot0.05,\ i\in\mathbb{Z}\cap[-15,15], \\
    z&=j\cdot0.05,\ j\in\mathbb{Z}\cap[-10,10], \\
    \theta&=k\cdot\pi/20,\ k\in\mathbb{Z}\cap[-5,5], \\
    u&=l\cdot\pi/8,\ l\in\mathbb{Z}\cap[-2,2]. \\
\end{align*}
The safety constraint requires the unicycle to avoid a rectangular obstacle positioned at $(y,z)\in[6,9]\times[-3,3]$.
}

\begin{figure*}[t]
    \centering
    \includegraphics[width=0.95\linewidth]{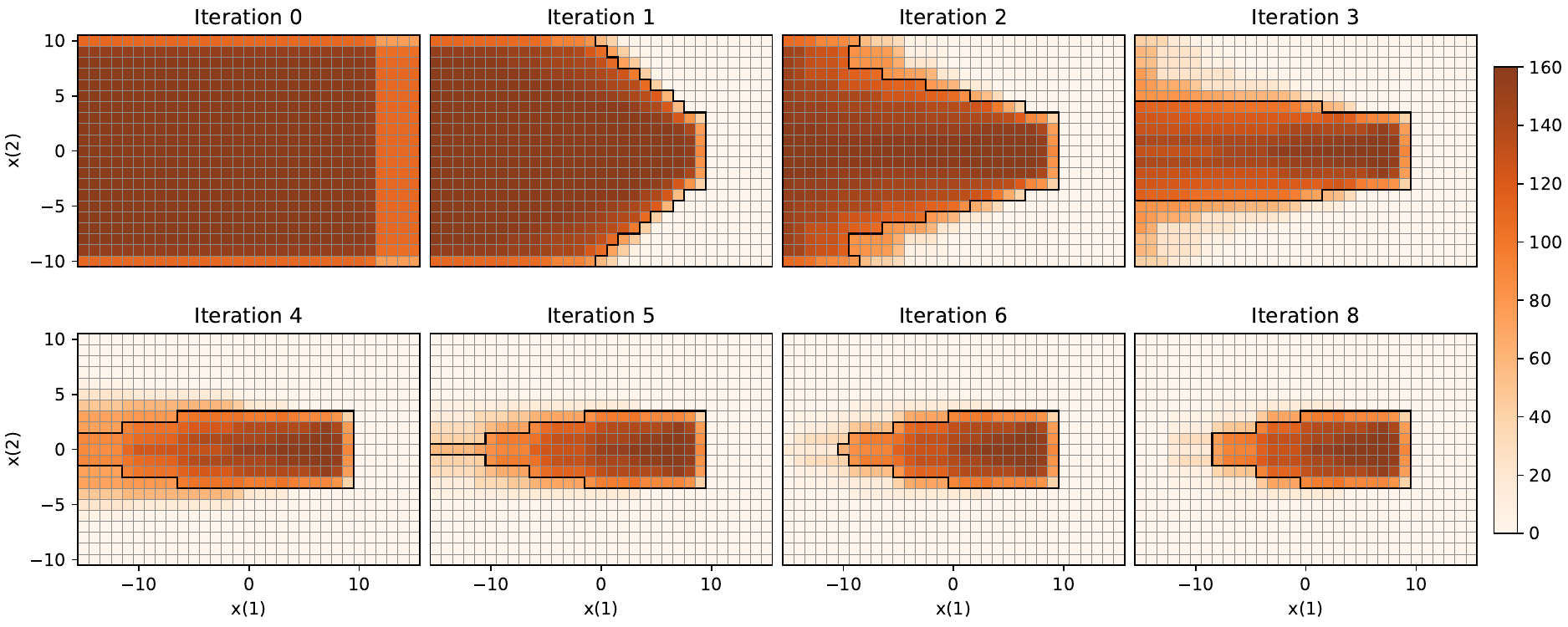}
    \caption{Model uncertainty degree at different iterations on unicycle.}
    \label{fig: unicycle model}
\end{figure*}

\change{
We set the Lipschitz constants $L=3$, $L_x=1.85$, $L_u=1$, and the initial transition set radius $r_0=2$. In this task, we do not assume that the true model is known in a small set.
To visualize the feasible regions and uncertain models in this 3D task, we fix the heading angle $\theta=0$ and show the results in Figure \ref{fig: unicycle region} and \ref{fig: unicycle model}. It shows that the feasible region monotonically expands, and the uncertainty degree monotonically decreases. The equilibrium is reached after 10 iterations with a feasible zone recall of $95.78\%$. The final feasible region visualized on the 2D plane (i.e., the feasible zone sliced at $\theta=0$) equals the maximum feasible region under the true model.
In contrast to the previous two tasks, the feasible zone in this task does not expand from a central region, but from the rightmost boundary of the state space, as shown in the sub-figure of iteration 0. This is because the rightmost states are the furthest from the obstacle and heading towards the opposite position. This makes these states to be easily identified as feasible.
Another phenomenon is that states closer to the obstacle on the $z$ axis are more difficult to be identified as feasible. This is because these states will possibly transfer to a constraint-violating state under the uncertain model.
As the exploration proceeds, the states on the left gradually enters the identified feasible region on the right, eventually covering all feasible states.
These results demonstrate the effectiveness of our algorithm beyond 2D tasks.
}

\change{
\section{Discussion}
\label{sec:Discussion}
In this section, we discuss practical considerations regarding the deployment of the SEE framework in real-world scenarios, addressing the construction of initial models, robustness to model misspecification, scalability to high-dimensional systems, and hyperparameter selection.
\subsection{Construction of initial uncertain models}
The theoretical guarantee of SEE relies on an initial uncertain model $\hf_0$ that contains the true environment dynamics $f$. Constructing such a model in practice typically involves a hybrid approach of physical modeling and data-driven learning.
For systems with known kinematic or dynamic structures, such as robots, the uncertainty often arises from specific physical parameters (e.g., mass, friction, damping). In these cases, domain randomization can be employed to construct $f_0$ by defining reasonable ranges for these parameters, ensuring the true dynamics fall within the randomized set.
For unstructured or unknown environments where physical priors are unavailable, probabilistic machine learning methods can be utilized. Techniques such as Gaussian Processes or ensemble neural networks are capable of quantifying epistemic uncertainty. By calibrating the confidence intervals of these models using initial offline data, one can construct an uncertainty set that covers the true dynamics with high probability.
This requirement aligns with standard assumptions in robust control, where the synthesis of safe controllers necessitates a disturbance support that includes the worst-case realization of the system.
}

\change{
\subsection{Robustness to model misspecification}
While our theoretical analysis assumes the uncertain model is well-calibrated (i.e., $f \in \hf_0$), in practice, this assumption may be violated due to unmodeled dynamics or aggressive approximation.
If the true dynamics lie outside the predicted uncertainty set, the safety guarantee may theoretically be compromised.
However, strict well-calibration is not the only condition for safety. The system remains safe as long as the uncertain model is pessimistic regarding the safety assessment.
Formally, safety is preserved if the worst-case CDF prediction under the uncertain model is lower than the CDF value under the true model:
\begin{equation}
    \max_{\tilde{x}'\in\hf(x,u)}\min_{u'\in\calU}G(\tilde{x}',u') \ge \min_{u'\in\calU}G(f(x,u),u'), \quad \forall (x,u)\in\calZ.
\end{equation}
This condition implies that the feasible zone identified by SEE is a conservative approximation of the true feasible zone. Consequently, even if the model is not perfectly calibrated, the agent will operate within a subset of the safe region, preventing constraint violations.
}

\change{
\subsection{Scalability and generalization}
Prior to Section IV.C, our theoretical framework is established in a general way that applies to both discrete and continuous state-action spaces.
Starting from Section IV.C, we propose two practical algorithm (i.e., Algorithm 2 and 3) for finding maximum feasible zone and least uncertain model respectively.
The requirement of traversing the state-action space makes Algorithm 2 and 3 only apply to discrete spaces.
Despite employing a sufficient condition for removability of the second kind, Algorithm 3 has a time complexity of $\mathcal{O}\sbr{N^3M^3}$, which might be unacceptable for high-dimensional systems or fine discretization.
A potential path toward scalability is through function approximation.
The core components of SEE include a feasible zone represented by an action feasibility function $G:\calX\times\calU\to\R$ and an uncertain model $\hf:\calX\times\calU\to\mathcal{P}\sbr{\calX}$.
The former can be directly approximated by a neural network, which is a common practice in safe RL \cite{yang2024feasibility}.
The later, being an irregular set-valued function, can be approximated in the form of $P:\calX\times\calU\times\calX\to\bbr{0,1}$ which takes as input $\xuxp$ and predicts whether $x^\prime\in\hf\sbr{x,u}$.
In Algorithm 2, it is required to minimize $G$ over $\calU$ and maximize $G$ over $\hf\sbr{x,u}$.
These can be done by learning a safety-oriented policy $\pi:x\mapsto\arg\min_{u}G\sbr{x,u}$ and an adversarial model $f_G:\sbr{x,u}\mapsto\arg\max_{x^\prime\in\hf\sbr{x,u}}G\sbr{x^\prime,\pi\sbr{x^\prime}}$.
}

\change{
\subsection{Estimation of Lipschitz constant}
The estimated Lipschitz constant is a critical prior knowledge required by our proposed SEE.
Prior works \cite{strongin1973convergence,hansen1992using,wood1996estimation} have studied the estimation of the Lipschitz constant of a black box function for a long time.
There are statistical methods for this problem which can be adopted when we have access to an offline dataset $\mathcal{D}=\bbr{\sbr{x_i,u_i,x^\prime_i}}$.
For example, Wood et al. \cite{wood1996estimation} proposed a method based on extreme value theory.
According to this method, we can first samples $n$ pairs of data $\bbr{\sbr{\sbr{x_{1,i},u_{1,i},x^\prime_{1,i}}, \sbr{x_{2,i},u_{2,i},x^\prime_{2,i}}}}_{i=1}^n$ from the offline dataset and compute the maximum slope $l=\max_i\frac{d\sbr{x_{1,i}^\prime, x_{2,i}^\prime}}{d\sbr{\sbr{x_{1,i},u_{1,i}},\sbr{x_{2,i},u_{2,i}}}}$.
Repeat this process $m$ times to get a batch of samples $\bbr{l_1,l_2,...,l_m}$ from the maximum slope distribution.
Finally, we can fit a three-parameter Reverse Weibull distribution to $\bbr{l_1,l_2,...,l_m}$, and the location parameter of the fitted distribution gives an estimation of the Lipschitz constant.
}

\section{Conclusion}
This paper, for the first time, formally defines the goal of safe exploration: to achieve the equilibrium between the feasible zone and the uncertain model.
This subverts the traditional understanding that learning an environment model is a separate task from expanding a feasible zone in safe exploration.
We further propose an equilibrium-oriented safe exploration framework called SEE, which alternates between finding the maximum feasible zone and the least uncertain model.
Through a graph formulation of the uncertain model, we prove that SEE guarantees monotonic refinement of the uncertain model, monotonic expansion of the feasible zone, and convergence to the equilibrium of safe exploration.
Experiments on classic control tasks show that our algorithm successfully expands the feasible zones with zero constraint violation, and achieves the equilibrium of safe exploration within a few iterations.

\bibliographystyle{IEEEtran}
\bibliography{ref}

\begin{IEEEbiography}[{\includegraphics[width=1in,height=1.25in,clip,keepaspectratio]{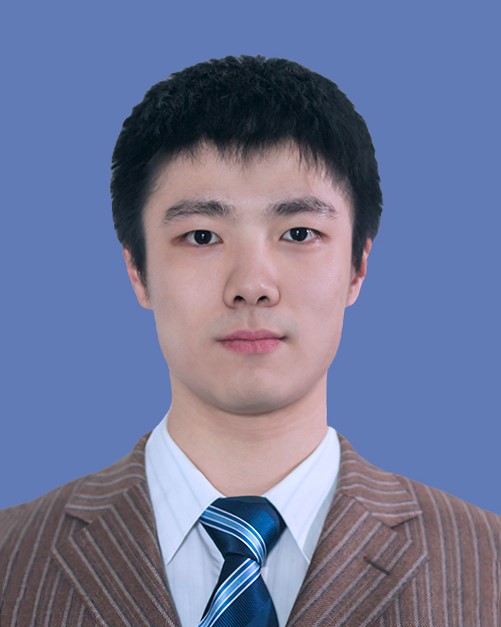}}]{Yujie Yang}
Yujie Yang received his B.S. degree in automotive engineering from the School of Vehicle and Mobility, Tsinghua University, Beijing, China in 2021. He is currently pursuing his Ph.D. degree in the School of Vehicle and Mobility, Tsinghua University, Beijing, China. His research interests include decision and control of autonomous vehicles and safe reinforcement learning.
\end{IEEEbiography}


\begin{IEEEbiography}
[{\includegraphics[width=1in,height=1.25in,clip,keepaspectratio]{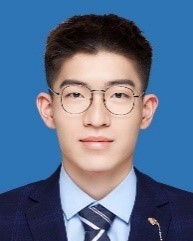}}]{Zhilong Zheng}
Zhilong Zheng received his B.S. degree in automotive engineering from the School of Vehicle and Mobility, Tsinghua University, Beijing, China, in 2022. He is currently pursuing his Ph.D. degree in the School of Vehicle and Mobility, Tsinghua University, Beijing, China. His research interests include decision and control of autonomous vehicles and safe reinforcement learning.
\end{IEEEbiography}


\begin{IEEEbiography}
[{\includegraphics[width=1in,height=1.25in,clip,keepaspectratio]{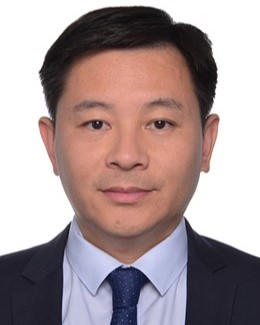}}]{Shengbo Eben Li}
Shengbo Eben Li received his M.S. and Ph.D. degrees from Tsinghua University in 2006 and 2009. Before joining Tsinghua University, he has worked at Stanford University, University of Michigan, and UC Berkeley. His active research interests include intelligent vehicles and driver assistance, deep reinforcement learning, optimal control and estimation, etc. He is the author of over 190 peer-reviewed journal/conference papers, and co-inventor of over 40 patents. Dr. Li has received over 20 prestigious awards, including Youth Sci. \& Tech Award of Ministry of Education (annually 10 receivers in China), Natural Science Award of Chinese Association of Automation (First level), National Award for Progress in Sci \& Tech of China, and best (student) paper awards of IET ITS, IEEE ITS, IEEE ICUS, CVCI, etc. He also serves as Board of Governor of IEEE ITS Society, Senior AE of IEEE OJ ITS, and AEs of IEEE ITSM, IEEE TITS, IEEE TIV, IEEE TNNLS, etc. 
\end{IEEEbiography}

\vfill

\end{document}